%

\documentclass[aoas,preprint,addressasfootnote]{imsart}

\setattribute{journal}{name}{}

\RequirePackage[OT1]{fontenc}
\RequirePackage{amsthm,amsmath}
\RequirePackage{natbib}
\RequirePackage[colorlinks,citecolor=blue,urlcolor=blue]{hyperref}

\usepackage{amssymb}
\usepackage{balance}
\usepackage{array,multirow}
\usepackage{verbatim}
\usepackage{bbm}
\usepackage{xspace}

\usepackage{graphicx,epstopdf} 
\usepackage{epsfig} 
\usepackage{subfigure} 

\usepackage{algorithm}
\usepackage{algorithmic}

\startlocaldefs
\newcommand{\SM}{\textsc{SpectralMirror}\xspace}
\newcommand{\BALD}{\begin{aligned}}
\newcommand{\EALD}{\end{aligned}}
\newcommand{\BALDS}{\begin{aligned*}}
\newcommand{\EALDS}{\end{aligned*}}
\newcommand{\BCAS}{\begin{cases}}
\newcommand{\ECAS}{\end{cases}}
\newcommand{\BEAS}{\begin{eqnarray*}}
\newcommand{\EEAS}{\end{eqnarray*}}
\newcommand{\BEQ}{\begin{equation}}
\newcommand{\EEQ}{\end{equation}}
\newcommand{\BIT}{\begin{itemize}}
\newcommand{\EIT}{\end{itemize}}
\newcommand{\BMAT}{\begin{bmatrix}}
\newcommand{\EMAT}{\end{bmatrix}}
\newcommand{\BNUM}{\begin{enumerate}}
\newcommand{\ENUM}{\end{enumerate}}

\newcommand{\ie}{{i.e.}}

\newcommand{\BA}{\begin{array}}
\newcommand{\EA}{\end{array}}

\newcommand{\ones}{\mathbf 1}

\newcommand{\reals}{\mathbb{R}}


\newcommand{\diag}{\mathop{\mathbf{diag}}}
\DeclareMathOperator{\Expect}{\mathbf{E}}

\DeclareMathOperator{\rank}{rank}

\DeclareMathOperator{\sign}{sgn}

\newcommand{\pc}{\hspace{1pc}}

\newcommand{\abs}[1]{\left| #1 \right|}

\DeclareMathOperator{\linspan}{span}
\newcommand{\norm}[1]{\left\| #1 \right\|}

\newcommand{\iid}{\emph{i.i.d.}}

\newcommand{\note}[1]{\textbf{\textcolor{blue}{#1}}}
\newcommand{\bS}{\mathbf{S}}
\newcommand{\cN}{\mathcal{N}}

\newtheorem{thm}{Theorem}

\newtheorem{lemma}{Lemma}

\def\<{\langle}
\def\>{\rangle}

\def\hU{\widehat{U}}

\newcommand{\prob}{\ensuremath{\mathbf{Pr}}}
\newcommand{\Prob}{\ensuremath{\mathbf{Pr}}}
\newcommand{\expect}{\ensuremath{\mathbb{E}}}
\newcommand{\sgn}{\ensuremath{\mathop{\mathrm{sgn}}}}

\newcommand{\techrep}[2]{#2}
\endlocaldefs

\begin{document}

\begin{frontmatter}

\title{Learning Mixtures of Linear Classifiers}
\runtitle{Learning Mixtures of Linear Classifiers}

\begin{aug}
\author{\fnms{Yuekai} \snm{Sun}\thanksref{m1}\ead[label=e1]{yuekai@stanford.edu}},
\author{\fnms{Stratis} \snm{Ioannidis}\thanksref{m2}\ead[label=e2]{stratis.ioannidis@technicolor.com}}
\and
\author{\fnms{Andrea} \snm{Montanari}\thanksref{m1}\ead[label=e3]{montanari@stanford.edu}}

\runauthor{Lee et al.}

\affiliation{Stanford University\thanksmark{m1}, Technicolor\thanksmark{m2}}

\address{Institute for Computational and Mathematical Engineering \\
Stanford University \\
475 Via Ortega, Stanford, CA 94305 \\
\printead{e1}}

\address{}

\address{Technicolor \\
175 S San Antonio Rd, Los Altos, CA 94022\\
\printead{e2}}

\address{}

\address{Department of Electrical Engineering\\
Department of Statistics \\
Stanford University \\
350 Serra Mall, Stanford, CA 94305  \\
\printead{e3}}

\address{}

\end{aug}

\runauthor{Sun et al.}

\begin{abstract} 
We consider a discriminative learning (regression) problem, whereby the regression function is a convex combination of $k$ linear classifiers. Existing approaches are based on the EM algorithm, or similar techniques, without provable guarantees. We develop a simple method based on spectral techniques and a `mirroring' trick, that discovers the subspace spanned by the classifiers' parameter vectors. Under a probabilistic assumption on the  feature vector distribution, we prove that this approach has nearly optimal statistical efficiency.

\end{abstract} 



\end{frontmatter}

\section{Introduction}\label{sec:intro}
Since Pearson's seminal contribution \citep{pearson1894contributions},
and most notably after the introduction of the 
EM algorithm
\citep{dempster1977maximum}, mixture models and latent variable models
have played a central role in statistics and machine learning, with numerous applications---see, e.g., \citet{mclachlan2004finite}, \citet{bishop1998latent}, and \citet{bartholomew2011latent}. 
Despite their ubiquity, fitting the parameters of a mixture model remains a challenging 
task. The most popular methods (e.g., the EM algorithm or likelihood
maximization by gradient ascent) 
are plagued by local optima
and come with little or no guarantees.
Computationally efficient algorithms with provable guarantees are an
exception in this area. Even 
the idealized problem of learning mixtures of Gaussians has motivated a copious theoretical literature
\citep{sanjeev2001learning,moitra2010settling}.

In this paper we consider the problem of modeling a regression function as a mixture of
$k$ components. Namely, we are given labels $Y_i\in\reals$ and feature vectors $X_{i}\in\reals^d$,
$i\in [n]\equiv \{1,2,\dots,n\}$, and we seek estimates of the parameters of a mixture model
\begin{align}
Y_i\big|_{X_i=x_i}\sim \textstyle \sum_{\ell=1}^kp_{\ell}\,f(y_i| x_i, u_{\ell})\, . \label{eq:GeneralMixtureRegression}
\end{align}
Here $k$ is the number of components, $(p_{\ell})_{\ell \in [k]}$ are weights of the components,
and $u_{\ell}$ is a vector of parameters for the $\ell$-th component. 
Models of this type have been intensely studied in the neural network literature since the 
early nineties \citep{jordan1994hierarchical,bishop1998latent}. They have also found numerous applications ranging from
object recognition \citep{quattoni2004conditional} to machine translation \citep{liang2006end}.
These studies are largely based on learning algorithms without consistency guarantees.

Recently,  \citet{chaganty2013spectral} considered mixtures of linear regressions, 
whereby the relation between labels and feature vectors is linear within each component;
i.e., $Y_i = \<u_{\ell},X_i\>+ {\rm noise}$ (here and below $\<a,b\>$ denotes the standard inner product in 
$\reals^m$). Equivalently, $f(y_i|x_i,u_{\ell}) = f_0(y_i-\<x_i,u_{\ell}\>)$ with $f_0(\,\cdot\,)$ a density of mean zero.
Building on a new approach developed by \citet{hsu2012spectral} and \citet{anandkumar2012tensor},
these authors propose an algorithm for fitting mixtures of linear regressions with provable guarantees. 
The main idea is to regress $Y_i^q$, for $q\in\{1,2,3\}$ against the
tensors $X_i$, $X_i\otimes X_i$, 
$X_i\otimes X_i\otimes X_i$.
The coefficients of these regressions are tensors whose decomposition yields the parameters $u_{\ell}$, $p_{\ell}$.

While the work of \citet{chaganty2013spectral} is a significant step
forward, it leaves several open problems:

\vspace{0.15cm}

\noindent\emph{Statistical efficiency.} Consider a standard scaling of the feature vectors, whereby the
  components $(X_{i,j})_{j\in [p]}$ are of order one. Then, the
  mathematical guarantees of  \citet{chaganty2013spectral} require a
  sample size $n\gg d^6$. This is substantially larger than the
  `information-theoretic' optimal scaling, and is an unrealistic
  requirement in  high-dimension (large $d$).  As noted in \citep{chaganty2013spectral}, this
  scaling is an intrinsic drawback of the tensor approach as it 
  operates in a higher-dimensional  space (tensor space) than
  the space in which data naturally live.

\vspace{0.15cm}

\noindent \emph{Linear regression versus classification.} In virtually all
  applications of  the mixture model
  \eqref{eq:GeneralMixtureRegression}, 
 labels $Y_i$ are categorical---see, e.g., 
\citet{jordan1994hierarchical}, \citet{bishop1998latent}, \citet{quattoni2004conditional}, \citet{liang2006end}. In this case, the
very first step of  Chaganty \& Liang, namely,
regressing $Y_i^2$ on $X_i^{\otimes 2}$ and $Y_i^3$ on
$X_i^{\otimes 3}$, breaks down. Consider---to be definite---the
important case of binary labels (e.g., $Y_i\in\{0,1\}$ or
$Y_i\in\{+1,-1\}$). Then powers of the labels do not provide
additional information (e.g., if $Y_i\in\{0,1\}$, then
$Y_i=Y_i^2$).
Also, since $Y_i$ is non-linearly related to $u_{\ell}$, $Y_i^2$ does not
depend only on $u_{\ell}^{\otimes 2}$.

\vspace{0.15cm}

\noindent \emph{Computational complexity.} The method of
\citet{chaganty2013spectral} solves a regularized linear
regression in $d^3$ dimensions and factorizes a third order tensor  
in $d$ dimensions. Even under optimistic assumptions (finite
convergence of iterative schemes), this requires $O(d^3n+d^4)$
operations.

\vspace{0.15cm}

In this paper, we develop a spectral approach to learning mixtures
of linear classifiers in high dimension. For the sake of simplicity,
we shall focus on the case of binary labels $Y_i\in\{+1,-1\}$, but we
expect our ideas to be more broadly applicable. We consider
regression functions of the form  $f(y_i|x_i,u_{\ell})
=f(y_i|\<x_i,u_{\ell}\>)$, i.e., each component corresponds to a
generalized linear model with parameter vector $u_{\ell}\in\reals^d$.
 In a nutshell, our method  constructs a
symmetric matrix $\hat{Q}\in\reals^{d\times d}$ by taking a suitable
empirical average of the data. The matrix $\hat{Q}$ has the following
property: $(d-k)$ of its eigenvalues are roughly degenerate. The
remaining $k$ eigenvalues correspond to eigenvectors that---approximately---span the same  
subspace as $u_1$, \dots, $u_{k}$. Once this space is
accurately estimated, the problem dimensionality is reduced to $k$;
as such, it is easy to come up with  effective prediction methods (as a matter
of fact, simple $K$-nearest neighbors works very well).

The resulting algorithm is \emph{computationally efficient}, as its most expensive step
is computing the eigenvector decomposition 
of a $d\times d$ matrix (which takes  $O(d^3)$ operations).  
Assuming Gaussian feature vectors $X_i\in\reals^d$, we  prove that our method is also \emph{statistically
  efficient}, i.e., it only requires $n \ge d$ samples to
accurately reconstruct the subspace spanned by $u_1,\dots,u_k$.
This is the same amount of data needed to estimate the covariance of
the feature vectors $X_i$ or a parameter vector $u_1\in\reals^d$ in the trivial case
of a mixture with a single component, $k=1$. It is unlikely that a
significantly better efficiency can be achieved without additional structure.

The assumption of Gaussian feature vectors $X_i$'s is admittedly
restrictive.
On one hand, as for the problem of learning mixtures of
Gaussians \citep{sanjeev2001learning,moitra2010settling}, we believe
that useful insights can be gained by studying this simple setting. On the
other,  and as discussed below, our proof does not really require the
distribution of the $X_i$'s to be Gaussian, and a strictly weaker
assumption is sufficient. We expect that future work will succeed in
further relaxing this assumption.
%
%
\subsection{Technical contribution and related work}

Our approach is related to the principal Hessian directions (pHd)
method
proposed by \citet{li1992principal} and further developed by \citet{cook1998principal}
and co-workers. PHd is an approach to dimensionality reduction and data 
visualization. It generalizes principal component analysis to the
regression (discriminative) setting, whereby each data point consists
of a feature vector $X_i\in\reals^d$ and a label $Y_i\in\reals$.
Summarizing, the idea is to form the `Hessian' matrix $\hat{H} =
n^{-1}\sum_{i=1}^nY_i\, X_iX_i^T\in\reals^{d\times d}$. (We assume here, for ease of
exposition, that the $X_i$'s have zero mean and unit covariance.) 
The eigenvectors associated to eigenvalues with largest magnitude
are used to identify a subspace in $\reals^d$ onto which to project
the feature vectors $X_i$'s.

Unfortunately, the pHd approach fails in general for the mixture models of interest here, namely, mixtures of linear classifiers. For instance, it fails when each component of \eqref{eq:GeneralMixtureRegression} is described by a logistic
model $f(y_i=+1| z) = (1+e^{-z})^{-1}$,  when features are
centered at ${\mathbb E}(X_i)=0$; a proof can be found in \techrep{the extended version of this paper~\citep{full_version}.}{Appendix~\ref{app:phd}.}

Our approach overcomes this problem by constructing $\hat{Q} =
n^{-1}\sum_{i=1}^nZ_i\, X_iX_i^T\in\reals^{d\times d}$. The $Z_i$'s
are pseudo-labels obtained by applying a  `mirroring' transformation 
to the $Y_i$'s. Unlike with $\hat{H}$, the eigenvector structure of
$\hat{Q}$ enables us to estimate the span of $u_1,\dots,u_k$.

As an additional technical contribution, we establish non-asymptotic 
bounds on the estimation error that allow to characterize
the trade-off between the data dimension $d$ and the sample size
$n$.  In contrast, rigorous analysis on pHd is limited to the low-dimensional
regime of $d$ fixed as $n\to\infty$. It would be interesting to
generalize the analysis developed here to characterize the
high-dimensional properties of pHd as well.

\section{Problem Formulation}\label{sec:problem}
\begin{figure*}[!t]
\hspace*{\stretch{1}}\includegraphics[trim=2.1cm 0 0 0 ,clip,width=0.32\textwidth]{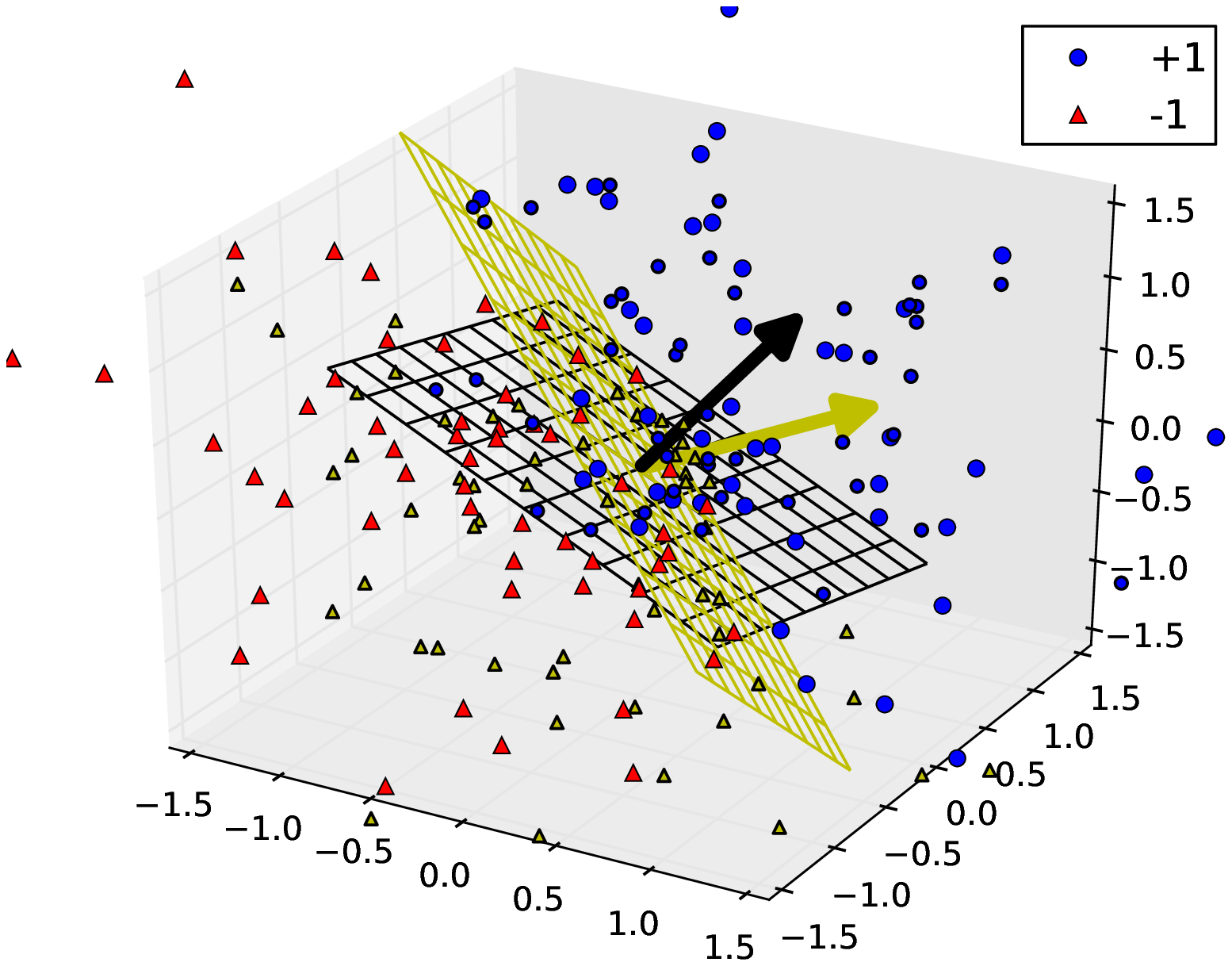}
\hspace*{\stretch{2}}\includegraphics[trim=2.1cm 0 0 0 ,clip,width=0.32\textwidth]{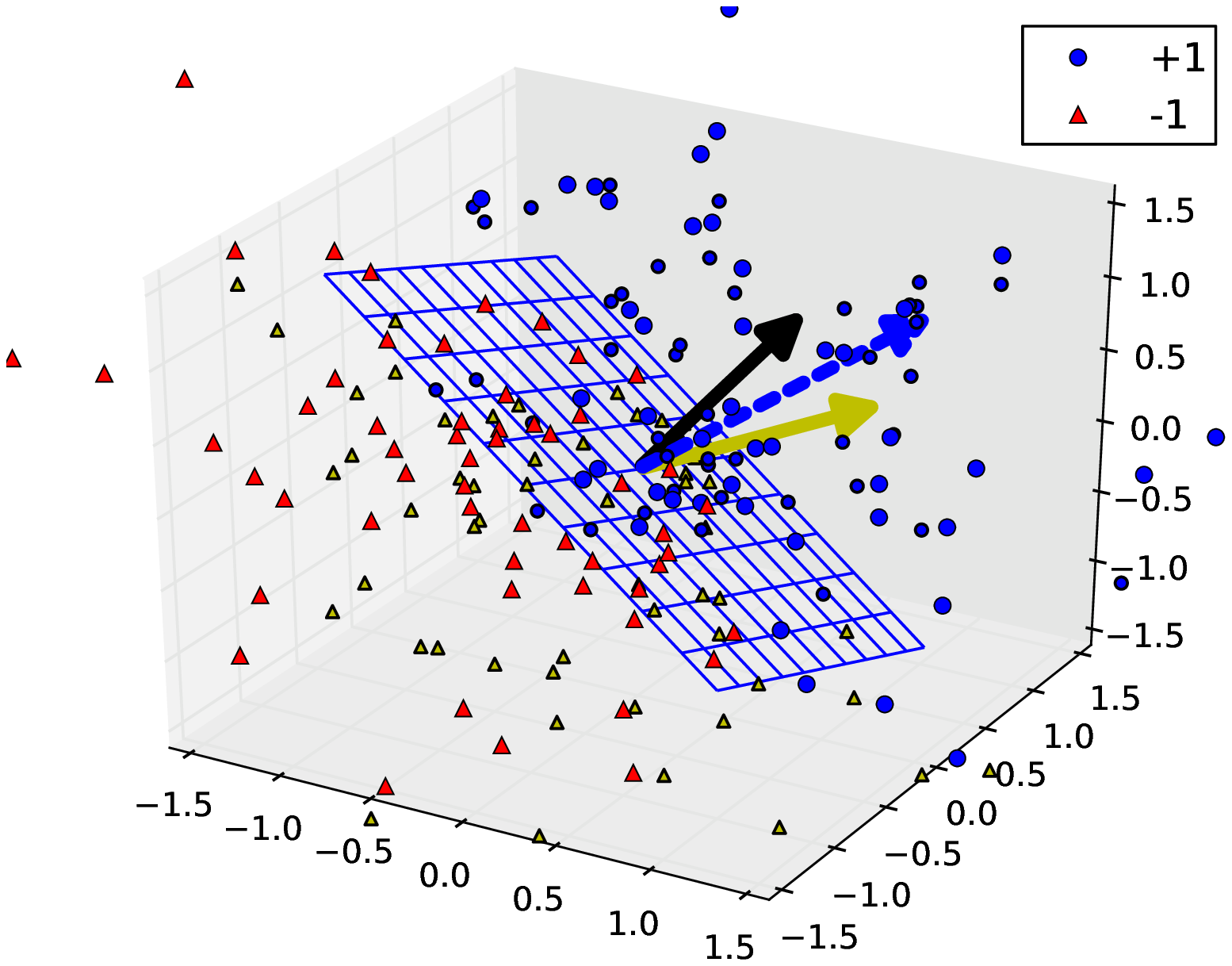}
\hspace*{\stretch{2}}\includegraphics[trim=2.1cm 0 0 0 ,clip,width=0.32\textwidth]{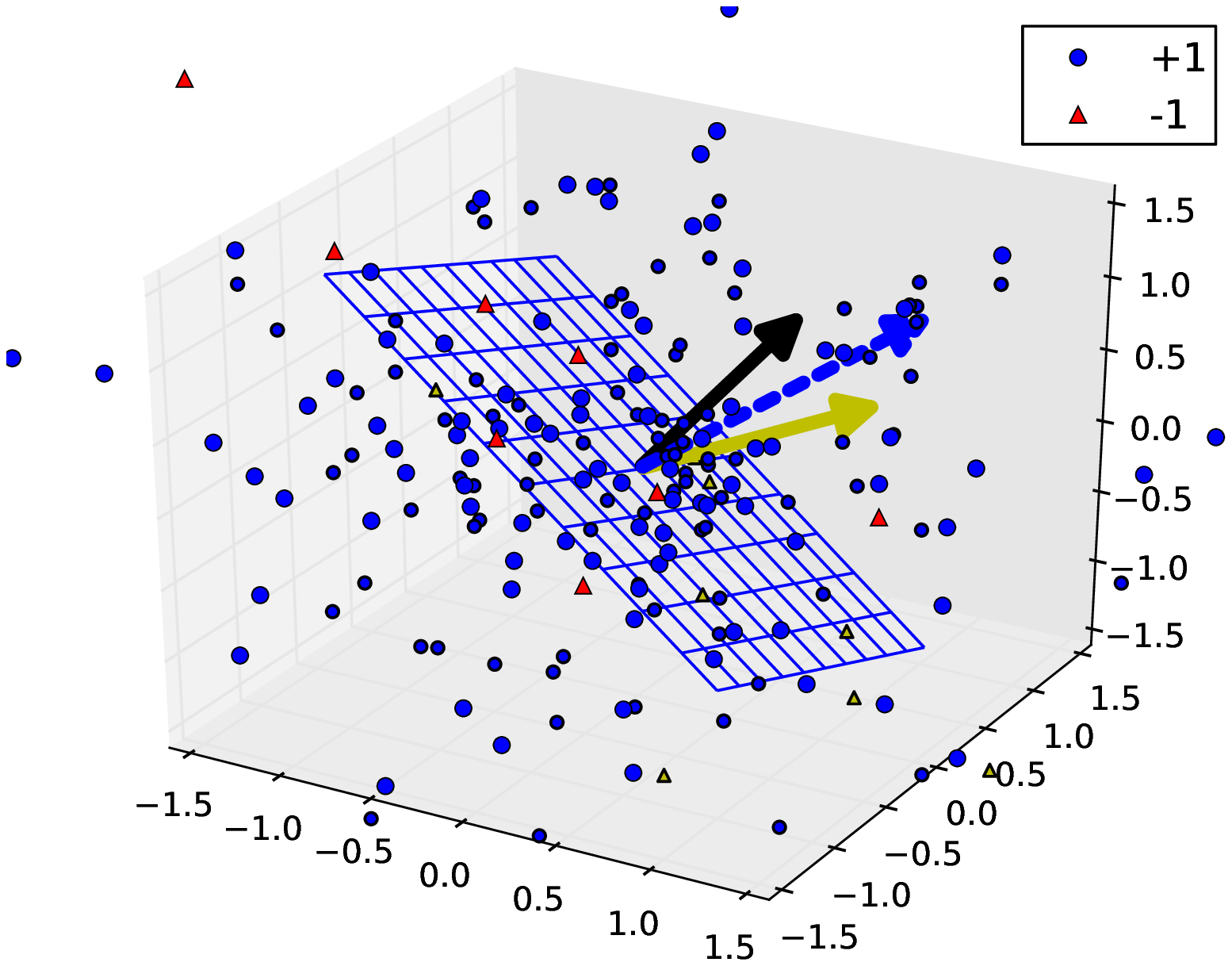}
\hspace*{\stretch{1}}\\
\hspace*{\stretch{1}}(a)
\hspace*{\stretch{2}}(b)
\hspace*{\stretch{2}}(c)\hspace*{\stretch{1}}
\caption{The mirroring process applied to a mixture of two 3-dimensional classifiers. Figure (a) shows labels generated by two classifiers in $\reals^3$; the figure includes the parameter profiles as well as the corresponding classification surfaces. Figure (b) shows  the mirroring direction $\hat{r}$ as a dashed vector, computed by \eqref{eq:mirror}, as well as the plane it defines; note that $\hat{r}$ lies within the positive cone spanned by the two classifier profiles, approximately. Finally, Figure (c) shows the result of the mirroring process: the region of points that was predominantly positive has remained unaltered, while the region of points that was predominantly negative has been flipped.  }\label{figure:3dplots}
\end{figure*}

\subsection{Model}
Consider a dataset comprising $n$ i.i.d.~pairs $(X_i,Y_i)\in \reals^d\times \{-1,+1\}$, $i\in [n]$. We refer to the vectors $X_i\in \reals^d$ as \emph{features} and to the binary variables as \emph{labels}.
We assume that the features $X_i\in\reals^d$ are sampled from a Gaussian distribution with mean $\mu\in \reals^d$ and a positive definite covariance $
\Sigma\in \reals^{d\times d}$.   The labels $Y_i\in \{-1,+1\}$  are generated by a \emph{mixture of linear classifiers}, i.e., 
\begin{align}\label{eq:mix}
\prob(Y_i=+1\mid X_i) = 
\textstyle \sum_{\ell=1}^k p_\ell\, f(\<u_\ell,X_i\>) \, .
\end{align}
Here, $k\ge 2$ is the number of components in the mixture; $(p_{\ell})_{\ell\in [k]}$ are the weights,
satisfying of course $p_{\ell}>0$, $\sum_{\ell=1}^kp_{\ell} = 1$;  and $(u_\ell)_{\ell\in [k]}$, 
$u_{\ell}\in\reals^d$ are the normals to the planes defining  the $k$ linear classifiers. We refer to each normal $u_\ell$ as the \emph{parameter profile} of the $\ell$-th classifier; we assume that the  profiles $u_\ell$, $\ell\in[k]$, are linearly independent, and that $k<n/2$.  

We assume that the function $f:\reals\to [0,1]$,  characterizing
the classifier response, 
is analytic, non-decreasing, strictly concave in $[0,+\infty)$, and satisfies: \begin{align}\label{fprops}\lim_{t\to\infty}\!f(t)\!=\!1, ~~\lim_{t\to -\infty}\!f(t)\!=\!0, \quad 1\!-\!f(t)\!=\!f(-t).\end{align}
 As an example, it is useful to keep in mind the logistic function $f(t) = (1+e^{-t})^{-1}$. Fig.~\ref{figure:3dplots}(a) illustrates a mixture of $k=2$ classifiers over $d=3$ dimensions.

\subsection{Subspace Estimation, Prediction and Clustering}

Our main focus is the following task:

\begin{quote}\textbf{Subspace Estimation:}  After observing $(X_i,Y_i),$ $i\in [n]$, estimate the subspace spanned by the profiles of the $k$ classifiers, \ie, $U \equiv {\rm span}(u_1,\dots,u_k)$.
\end{quote}

For $\hU$ an estimate of $U$, we characterize performance via the \emph{principal angle} between the two spaces, namely
$$d_P(U,\hU) = \max_{ x\in U, y \in \hU} \arccos\left(\tfrac{\<x,y\>}{\|x\|\|y\|}\right).$$
Notice that projecting the features $X_i$ on $U$ entails no loss of information w.r.t.~\eqref{eq:mix}.  This can be exploited  to improve the performance of several learning tasks through dimensionality reduction, by projecting the features to the estimate of the subspace $U$. Two such tasks are:

\begin{quote}\textbf{Prediction}: Given a new feature vector $X_{n+1} $, predict the corresponding 
label $Y_{n+1}$.
\end{quote} 

\begin{quote}\textbf{Clustering}: Given a new feature vector and label pair $(X_{n+1},Y_{n+1})$, identify the classifier that generated the label. 
\end{quote}

As we will see in Section~\ref{sec:experiments}, our subspace estimate can be used to significantly improve the performance of both prediction and clustering.
\subsection{Technical Preliminary}
We review here a few definitions used in our exposition. 
The \emph{sub-gaussian norm} of a random variable $X$ is:
$$
\|X\|_{\psi_2} =\sup_{p \ge 1}\frac{1}{\sqrt{p}}(\expect[|X|^p])^{1/p}.
$$
We say $X$ is sub-gaussian if $\|X\|_{\psi_2}<\infty$.
We say that a random vector $X\in \reals^d$ is sub-gaussian if $\<y,X\>$ is sub-gaussian for any $y$ on the unit sphere $\bS^{d-1}$. 

We  use the following variant of Stein's identity \cite{stein1973estimation,liu1994Siegel}. Let  $X\in \reals^d$, $X'\in \reals^{d'}$ be jointly Gaussian random vectors, and consider a function \mbox{$h:\reals^{d'}\to\reals$} that is almost everywhere (a.e.)~differentiable and satisfies $\expect[|\partial h(X')/\partial x_i|]<\infty$, $i\in [d']$. Then, the following identity holds:
\begin{align}\mathrm{Cov}(X,h(X')) = \mathrm{Cov}(X,X')\expect[\nabla h(X') ]. \label{stein}\end{align}

\section{Subspace Estimation}\label{sec:algorithm}
\begin{algorithm}[!t]
  \caption{\SM}\label{algo:spectral_mirror}
  \begin{algorithmic}[1]
    \REQUIRE{Pairs $(X_i,Y_i)$, $i \in [n]$}
    \ENSURE{Subspace estimate $\hU$}
    \STATE{$ \hat{\mu} \leftarrow \frac{1}{\lfloor n/2\rfloor} \sum_{i=1}^{\lfloor n/2\rfloor} X_i$ \label{line:mu}}
\STATE{$\hat{\Sigma} \leftarrow \frac{1}{\lceil n/2\rceil} \sum_{ i = 1}^{\lfloor n/2\rfloor} (X_i - \hat{\mu})(X_i - \hat{\mu})^T$ \label{line:sigma}}
\STATE{$\hat{r} \leftarrow  \frac{1}{\lfloor n/2 \rfloor} \sum_{i=1}^{\lfloor n/2\rfloor} Y_i  \hat{\Sigma}^{-1}(X_i\!-\!\hat{\mu})$ \label{line:direction}}
\STATE{ \textbf{for each} $i \in\{ \lfloor n/2\rfloor+1,\dots,n\}$: \vspace*{-0.2cm} \label{line:mirror}
 $$  Z_i \leftarrow Y_i \sgn\<\hat{r},X_i\> \vspace*{-0.6cm}$$}
\STATE{ $\displaystyle \hat{Q}\! \!\leftarrow\!\! \frac{1}{\lceil n/2\rceil }\!\!\sum_{i=\lfloor n/2\rfloor+1}^n\!\!\!\!\! Z_i \hat{\Sigma}^{-1/2}(X_i\!-\!\hat{\mu})(X_i\!-\!\hat{\mu})^T\hat{\Sigma}^{-1/2}$ \label{line:Q}}
\STATE{Find eigendecomposition $\sum_{\ell=1}^d \lambda_\ell w_\ell w_\ell^T
 $ of $\hat{Q}$ \label{line:eig}}
\STATE{Let $\lambda_{(1)}, \ldots, \lambda_{(k)}$ be the $k$ eigenvalues furthest from the median. \label{line:exteig} }
 \STATE{$\hat{U} \leftarrow \linspan\left(\hat{\Sigma}^{-1/2} w_{(1)},\ldots,\hat{\Sigma}^{-1/2} w_{(k)}\right)$\label{line:hatu}}
  \end{algorithmic}
\end{algorithm}

In this section, we present our algorithm for subspace estimation, which we refer to as \SM. Our main technical contribution, stated formally below, is that the output $\hat{U}$ of \SM is a consistent estimator of the subspace $U$ 
as soon as {$n\ge C\,d$}, for a sufficiently large constant $C$.

\subsection{Spectral Mirror  Algorithm}

We begin by presenting our algorithm for estimating the subspace span $U$. Our algorithm consists of three main steps. First, as pre-processing, we estimate the mean and covariance of the underlying features $X_i$. Second, using these estimates, we identify a vector $\hat{r}$ that concentrates near the convex cone spanned by the profiles $(u_\ell)_{\ell \in [k]}$. We use this vector to perform an operation we call \emph{mirroring}: we `flip' all labels lying in the negative halfspace determined by $\hat{r}$. Finally, we compute a weighted covariance matrix $\hat{Q}$ over all $X_i$, where each point's contribution is weighed by the mirrored labels: the eigenvectors of this matrix, appropriately transformed, yield the span $U$.

 These operations are summarized in Algorithm~\ref{algo:spectral_mirror}. We discuss each of the main steps in more detail below:

\noindent\textbf{Pre-processing.} (Lines 1--2)  We split the dataset into two halves. Using the first half (\emph{i.e.}, all $X_i$ with $1\leq i \leq \lfloor \frac{n}{2} \rfloor)$, we construct estimates $\hat{\mu}\in\reals^d$ and $\hat{\Sigma}\in \reals^{d\times d}$ of the feature mean and covariance, respectively.  Standard Gaussian (\ie, `whitened') versions of features $X_i$ can be constructed as $\hat{\Sigma}^{-1/2}(X_i\!-\!\hat{\mu})$.

\noindent{\textbf{Mirroring.}} (Lines 3--4)  We compute the vector:
\begin{align}\hat{r} =  \frac{1}{\lfloor n/2 \rfloor} \textstyle\sum_{i=1}^{\lfloor n/2\rfloor} Y_i \hat{\Sigma}^{-1}(X_i\!-\!\hat{\mu})  \in \reals^d. \label{eq:mirror} \end{align}
We refer to $\hat{r}$ as the \emph{mirroring direction}. In Section~\ref{sec:proof}, we show that $\hat{r}$
concentrates around 
its population ($n=\infty$) version $r\equiv \expect[ Y\Sigma^{-1}(X-\mu)]$. Crucially,  $r$  lies in the interior of the convex cone spanned by the parameter profiles, i.e., $r=\sum_{\ell=1}^k \alpha_\ell u_\ell$, for some positive $\alpha_\ell>0$, $\ell\in [k]$ (see Lemma~\ref{lemma:quad} and Fig.~\ref{figure:3dplots}(b)). 
Using this $\hat{r}$, we `mirror' the labels in the second part of the dataset:
$$Z_i = Y_i \sgn\<\hat{r},X_i\>, \quad \text{for~} \lfloor n/2\rfloor< i \leq n. $$
In words, $Z_i$ equals $Y_i$ for all $i$ in the positive half-space defined by the mirroring direction; instead, all labels for points $i$ in the negative half-space are flipped (\ie, $Z_i=-Y_i$). This is illustrated in Figure~\ref{figure:3dplots}(c). 

\noindent{\textbf{Spectral Decomposition.}} (Lines 5--8) 
The mirrored labels are used to compute a weighted covariance matrix over  whitened features as follows:
\begin{align*}
\hat{Q} & =\frac{1}{\lceil \frac{n}{2}\rceil} \sum_{i=\lfloor n/2\rfloor+1}^n Z_i\hat{\Sigma}^{-1/2}(X_i-\hat{\mu})(X_i-\hat{\mu})^T \hat{\Sigma}^{-1/2}
\end{align*}
  The spectrum of $\hat{Q}$ has a specific structure, that reveals the span $U$. In particular, as we will see in Section~\ref{sec:proof}, $\hat{Q}$ converges to a matrix $Q$ that contains an eigenvalue with multiplicity $n-k$; crucially, the eigenvectors corresponding to the remaining $k$ eigenvalues, subject to the linear transform $\hat{\Sigma}^{-1/2}$, span the subspace $U$.  As such, the final steps of the algorithm amount to discovering the eigenvalues that `stand out' (\ie, are different from the eigenvalue with multiplicity $n-k$), and rotating the corresponding eigenvectors to obtain $\hat{U}$. More specifically, let $(\lambda_\ell,w_\ell)_{\ell\in [d]}$ be the eigenvalues and eigenvectors of $\hat{Q}$. Recall that $k< n/2$. The algorithm computes the median of all eigenvalues, and identifies the $k$ eigenvalues furthest from this median; these are the `outliers'.  The corresponding $k$ eigenvectors, multiplied by $\hat{\Sigma}^{-1/2}$, yield the subspace estimate $\hat{U}$. 

The algorithm \emph{does not require} knowledge of the classifier response function $f$. 
Also, while we assume knowledge of  $k$, an eigenvalue/eigenvectors  statistic (see, e.g., \citet{zelnik2004self}) can  be used to estimate $k$, as the number of `outlier' eigenvalues. 

\subsection{Main Result}
Our main result states that \SM is a consistent estimator of the subspace spanned by $(u_\ell)_{\ell\in [k]}$.
This is true for `most'  $\mu\in \reals^d$.  Formally, we say that an event occurs for \emph{generic} $\mu$  
if adding an arbitrarily small random perturbation to $\mu$, the event occurs with probability $1$ w.r.t.~this perturbation. 
\begin{thm} \label{thm:main}
Denote by $\hat{U}$ the output of \SM, and let $P^\bot_{r} \equiv I -r r^T/\|r\|^2$ be 
the projector orthogonal to $r$, given by \eqref{eq:RDefinition}. Then, for generic  $\mu$, as well as for $\mu=0$, there exists $\epsilon_0>0$ such that, 
for all  $\epsilon\in [0,\epsilon_0)$,
\begin{align*}
\Prob(d_P(P^\bot_{r} U,\hat{U}) > \epsilon) 
\le C_1\exp(-C_2 \frac{n\epsilon^2}{d}).
\end{align*}
Here $C_1$ is an absolute constant, and $C_2>0$ depends on $\mu$, $\Sigma$, $f$ and $(u_{\ell})_{\ell\in [k]}$.
\end{thm}
In other words, $\hat{U}$ provides an accurate estimate of $P^\bot_{r} U$ as soon as $n$ is significantly larger than
$d$. This holds for generic $\mu$, but we also prove that it holds for the specific and important case where $\mu=0$; in fact, it also holds for all small-enough $\mu$. Note that this does not guarantee that $\hat{U}$ spans the direction $r\in U$; nevertheless, as shown below, the latter is accurately estimated by $\hat{r}$ (see Lemma~\ref{lemma:hatr_convergence}) and can be added to the span, if necessary. Moreover, our experiments suggest this is rarely the case in practice, as $\hat{U}$ indeed includes the direction $r$ (see Section~\ref{sec:experiments}).

\section{Proof of Theorem~\ref{thm:main}}\label{sec:proof}

 Recall that we denote by $r$
the population ($n=\infty$) version  of $\hat{r}$.
Let  $g(s) \equiv 2f(s)-1$, for $s\in \reals$, and
 observe that
$\expect[Y\mid X=x] = \sum_{\ell =1}^kp_\ell g(\<u_\ell,x\>).$ 
Hence,
\begin{align}
 r = \expect\left[\Sigma^{-1}(X-\mu) \cdot \left(\textstyle\sum_{\ell =1}^kp_\ell g(\<u_\ell,X\>)\right) \right].
\label{eq:RDefinition}
\end{align}
%
%
Then, the following concentration result holds:
\begin{lemma} \label{lemma:hatr_convergence}
There exist an absolute constant $C>0$ and $c_1,c_1',c_2'$ that depend on  $\|X\|_{\psi_2}$ such that:
$$
\Prob(\|\hat{r} - r\|_2 \!\ge\! \epsilon)\!\le\! Ce^{-\min\bigl\{\frac{c_2n\epsilon^2}{d},\bigl(c_1'\sqrt{n}\epsilon - c_2'\sqrt{d}\bigr)^2\bigr\}}.
$$
\end{lemma}
The proof of Lemma~\ref{lemma:hatr_convergence} relies on a large deviation inequality for sub-Gaussian vectors, and is provided in \techrep{\citep{full_version}.}{Appendix~\ref{app:hatr}.} Crucially, $r$ 
lies in the interior of the convex cone spanned by the parameter profiles:
\begin{lemma}\label{lemma:quad}
$r = \sum_{\ell=1}^k \alpha_\ell u_\ell$ for some $\alpha_\ell> 0$, $\ell\in [k]$. 
\end{lemma}
\begin{proof}
From  \eqref{eq:RDefinition},
$$\textstyle r = \sum_{\ell=1}^k p_\ell \Sigma^{-1}\expect[(X-\mu)g(\<u_\ell,X\>)]. $$
It thus suffices to show that $\Sigma^{-1}\expect[(X-\mu)g(\<u,X\>)]=\alpha u$, for some $\alpha>0$.
Note that $X'=\langle u, X\rangle$ is normal with mean $\mu_0=u^T\mu$ and variance $\sigma_0^2=u^T\Sigma u>0$. 
Since $f$ is analytic and non-decreasing, so is $g$; moreover,  $g'\geq 0$. 
This, and the fact that $g$ is non-constant, implies $\expect[g'(X')]>0$. On the other hand, from Stein's identity~\eqref{stein}, $\expect[g'(X')]= \frac{1}{\sigma_0^2} \expect[X'g(X')]<\infty$, as $g$ is bounded. Hence:
\begin{align*}
&\Sigma^{-1}\expect[(X-\mu)g(\<u,X\>)]\\
& \stackrel{\eqref{stein}}{=} \Sigma^{-1}\mathrm{Cov}(X,\langle u, X\rangle) \expect[g'( X')],~\text{where}~X'\sim\mathcal{N}(\mu_0,\sigma_0^2)\\
& = \Sigma^{-1}\cdot \expect[(X-\mu)X^Tu] \cdot  \expect[g'( X')]\\
& = \Sigma^{-1}\cdot \Sigma u\cdot  \expect[g'( X')] =  \expect[g'( X')] \cdot u
\end{align*}
and the lemma follows. 
\end{proof}

For $r$ and $(\alpha_\ell)_{\ell\in [k]}$ as in Lemma~\ref{lemma:quad}, define
\begin{align*}
z(x) &= \expect[ Y\sgn(\langle r,X \rangle )\mid X=x] \\
&=\textstyle \left(\sum_{\ell=1}^k p_\ell g(\<x,u_\ell\>)\right) \cdot \sgn\left(\sum_{\ell=1}^k\alpha_\ell \<x,u_\ell\>\right).   
\end{align*} 
Observe that $z(x)$ is the expectation of the mirrored label at a point $x$ \emph{presuming that the mirroring direction is exactly $r$}. 
Let $Q\in \reals^{d\times d}$ be the matrix: 
$$
Q=\expect[z(X)\Sigma^{-1/2}(X-\mu)(X-\mu)^T\Sigma^{-1/2}].
$$
Then $\hat{Q}$ concentrates around  $Q$, as stated below. 
\begin{lemma} 
\label{lem:q-large-deviation}
Let $\epsilon_0 \equiv \min\{\alpha_1,\dots,\alpha_k\}\sigma_{\min}(U)$, where the $\alpha_\ell>0$
are defined as per Lemma \ref{lemma:quad} and $\sigma_{\min}(U)$ is the smallest non-zero singular value of $U$.  Then 
for $\epsilon<\min(\epsilon_0,\|r\|/2)$: 
$$
\Prob(\|\hat{Q} - Q\|_2 > \epsilon) \le C\exp\{-F(\epsilon^2)\},
$$
where $
F(\epsilon) \equiv \min\left\{\frac{c_1n\epsilon^2}{d},\bigl(c_1'\sqrt{n}\epsilon - c_2'\sqrt{d}\bigr)^2\right\},
$ 
 $C$ an absolute constant, and $c_1,\,c_1',\,c_2'$
 depend on $\mu$, $\Sigma$, and $\|r\|$.
\end{lemma}
The proof of Lemma~\ref{lem:q-large-deviation} is \techrep{also provided in \citep{full_version}.}{in Appendix~\ref{app:lemmQ}.} We again rely on large deviation bounds for sub-gaussian random variables; nevertheless, our proof diverges from standard arguments because $\hat{r}$, rather than $r$, is used as  a mirroring direction. Additional care is needed to ensure that (a) when $\hat{r}$ is close enough to $r$, its projection to $U$ lies in the interior of the convex cone spanned by the profiles, and (b) although $\hat{r}$ may have a (vanishing) component outside the convex cone, the effect this has on $\hat{Q}$ is negligible,
for $n$ large enough.

An immediate consequence of Lemma~\ref{lemma:quad} is that $r$ reveals a direction in the span $U$.  The following lemma states that the eigenvectors of $Q$, subject to a rotation, yield the remaining $k-1$ directions: 
\begin{lemma}\label{lemma:spectrum}
Matrix $Q$ has at most $k+1$ distinct eigenvalues. One eigenvalue, termed $\lambda_0$, has multiplicity $d-k$. For generic $\mu$, as well as for $\mu=0$, the eigenvectors $w_{1},\ldots,w_{k}$ corresponding to the remaining eigenvalues $\lambda_{1}$, \ldots, $\lambda_{k}$ are such that
$$P^\bot_{r} U = \linspan(P^\bot_{r}\Sigma^{-1/2}w_{1},\ldots,P^\bot_{r}\Sigma^{-1/2}w_{k}), $$
where $P^\bot_r$ is the projection orthogonal to $r$.
\end{lemma}
\begin{proof}
Note that
\begin{align*}
Q&=\expect[z(X)\Sigma^{-\frac{1}{2}}(X-\mu)(X-\mu)^T\Sigma^{-\frac{1}{2}}]\nonumber\\ 
&=\expect[z(\Sigma^{1/2}W+\mu)WW^T],\quad\text{where } W\sim \mathcal{N}(0,I)\nonumber\\
& =\expect\big[\sum_{\ell=1}^k p_\ell g(\<\Sigma^{\frac{1}{2}} W\!+\!\mu,\!u_\ell\>)  \sgn(\<\Sigma^{\frac{1}{2}} W\!+\!\mu,\! r\>)WW^T\big]\\
& =\expect\big[\sum_{\ell=1}^k p_\ell g(\< W+\tilde{\mu},\tilde{u}_\ell\>)\sgn(\< W+\tilde{\mu},\tilde{ r}\> )WW^T\big]
\end{align*}
for $\tilde{u}_\ell\equiv \Sigma^{\frac{1}{2}}u_\ell$, $\tilde{r} \equiv \Sigma^{\frac12}r$, and $\tilde{\mu} \equiv \Sigma^{-\frac12}\mu$.
Hence $Q = \sum_{\ell=1}^k p_\ell Q_\ell$ where
$$Q_\ell = \expect[g(\<\tilde{u}_\ell,W+\tilde{\mu}\>)\sgn(\<\tilde{r},W+\tilde{\mu}\>)WW^T].$$
By a  rotation invariance argument, $Q_\ell$ can be written as
\begin{align}Q_\ell = a_\ell I + b_\ell(\tilde{u}_\ell \tilde{r}^T + \tilde{r}\tilde{u}_\ell^T) +c_\ell \tilde{u}_\ell\tilde{u}_\ell^T+d_\ell \tilde{r}\tilde{r}^T \label{eq:breakdown}\end{align}
for some $a_\ell,b_\ell,c_\ell, d_\ell \in \reals$. 
To see this,  let $\tilde{Q}_\ell=[\tilde{q}_{ij}]_{i,j\in [d]}$, and suppose first that 
\begin{align}\tilde{r} = [\tilde{r}_1,\tilde{r}_2, 0,\ldots, 0]\text{ and }\tilde{u}_\ell = [\tilde{u}_{\ell1}, \tilde{u}_{\ell2}, 0,\ldots, 0].\label{allzeros}\end{align}
  Since $W$ is whitened,  its coordinates are independent. Thus, under \eqref{allzeros}, $\tilde{q}_{ij}=0$ for all $i\neq j$ s.t.~$i,j>2,$ and $\tilde{q}_{ii}=a_\ell$ for $i>2$, for some $a_\ell$. Thus $\tilde{Q}_\ell=a_\ell I + B$, where $B$ is symmetric and  0 everywhere except perhaps on $B_{11},B_{12},B_{21},B_{22}$ (the top left block). Since the profiles $u_\ell$ are linearly independent, so are $\tilde{u}_\ell$ and $\tilde{r}$, by Lemma~\ref{lemma:quad}. Hence, matrices $\tilde{u}_\ell\tilde{r}^T+\tilde{r}\tilde{u}_\ell^T, \tilde{u}_\ell\tilde{u}_\ell^T, \tilde{r}\tilde{r}^T$ span all such $B$, so \eqref{eq:breakdown} follows.  
Moreover, since $W$ is whitened, $\tilde{Q}_\ell$ is rotation invariant and thus \eqref{eq:breakdown} extends beyond \eqref{allzeros}; indeed, if $\tilde{r}'=R\tilde{r}$, $\tilde{u}'_\ell=R\tilde{u}_\ell$, $\tilde{\mu}'=R\tilde{\mu}$ where $R$ a rotation matrix (i.e. $RR^T=I$), then $Q'=RQR^T$. Hence, as \eqref{allzeros} holds for some orthonormal basis,  \eqref{eq:breakdown} holds for all bases.

Let $a= \sum_{\ell=1}^k p_\ell a_\ell$. Then
\begin{align*}
Q - a I =&  \sum_{\ell=1}^k p_\ell d_\ell \tilde{r}\tilde{r}^T +\tilde{r}(\sum_{\ell =1 }^k p_\ell b_\ell \tilde{u}_\ell)^T +\\
&+(\sum_{\ell =1 }^kp_\ell b_\ell\tilde{u}_\ell)\tilde{r}^T+\sum_{\ell=1}^k p_\ell c_\ell \tilde{u}_\ell \tilde{u}_\ell^T.
\end{align*}
Let $P^\bot_{\tilde{r}}$ be the projector orthogonal to $\tilde{r}$, i.e., $P^\bot_{\tilde{r}} = I -\frac{\tilde{r}{\tilde{r}}^T}{\|\tilde{r}\|_2^2}. $
Let $v_\ell \equiv P^\bot_{\tilde{r}} \tilde{u}_\ell$.
 Lemma~\ref{lemma:quad} and the linear independence of $\tilde{u}_\ell$ imply that $v_\ell\neq 0$, for all $\ell\in [k]$.
Define $R\equiv P^\bot_{\tilde{r}} (Q-a I) P^\bot_{\tilde{r}} = \textstyle\sum_{\ell=1}^k \gamma_\ell v_\ell v_\ell^T, $
where $\gamma_\ell = p_\ell c_\ell$, $\ell \in [k]$.
We will show  below that for generic $\mu$, as well as for $\mu=0$,  $\gamma_\ell\neq 0$ for all $\ell\in[k]$.
This implies that  $\rank(R) = k-1$. Indeed, $R =P^\bot_{\tilde{r}} \sum \gamma_\ell \tilde{u}_\ell\tilde{u}_\ell^T P^\bot_{\tilde{r}}= P^\bot_{\tilde{r}}\tilde{R} P^\bot_{\tilde{r}},$ where $\tilde{R}$ has rank $k$ by the linear independence of profiles. As $P_\bot$ is a projector orthogonal to a 1-dimensional space, $R$ has rank at least $k-1$.  On the other hand, $\mathrm{range}(R) \subseteq \tilde{U}$, for $\tilde{U} =  \linspan( \tilde{u}_1,\ldots,\tilde{u}_\ell)$, and $\tilde{r}^TR\tilde{r}=0$ where $\tilde{r}\in \tilde{U}\setminus \{0\}),$ so $\rank(R)=k-1$. The latter also implies that $\mathrm{range}(R) = P^\bot_{\tilde{r}} \tilde{U}$, as $\mathrm{range}(R) \bot \tilde{r}$, $\mathrm{range}(R)\subseteq \tilde{U}$, and $\dim(\mathrm{range}(R))=k-1$.

The above imply that $Q$ has one eigenvalue of multiplicity $n-k$, namely $a$. Moreover, the eigenvectors $w_{1},\ldots,w_{k}$ corresponding to the remaining eigenvalues (or, the non-zero eigenvalues of $Q-aI$)  are such that $$P^\bot_{\tilde{r}}\Sigma^{1/2} U=P^\bot_{\tilde{r}}\linspan(w_{1},\ldots,w_{k}).$$
The lemma thus follows by multiplying both sides of the above equality with $P^\bot_{r}\Sigma^{-1/2}$, and using the fact that $P^\bot_{r}\Sigma^{-1/2}P^\bot_{\tilde{r}} = P^\bot_r\Sigma^{-1/2}.$

It remains to show that $\gamma_\ell\neq 0$, for all $\ell \in [k]$, when $\mu$ is generic or 0. 
 Note that
\begin{align}
c_\ell &\langle \tilde{u}_\ell, v_\ell\rangle^2 \stackrel{\eqref{eq:breakdown}}{=} \langle v_\ell, (Q_\ell -a_\ell I) v_\ell \rangle = \label{tildec}
\\ &\mathrm{Cov}(g(\langle\tilde{u}_\ell,W+\tilde{\mu} \rangle)  \sgn(\langle \tilde{r}, W +\tilde{\mu} \rangle) ; \langle W , v_\ell\rangle^2)\equiv \tilde{c}_\ell \nonumber
\end{align}
It thus suffices to show that $\tilde{c}_\ell\neq 0$. 
Lemma~\ref{lemma:quad} implies that $\tilde{u}_\ell = v_\ell+c \tilde{r}$ for some $c>0$, 
hence
\begin{align*}
\tilde{c}_\ell = \mathrm{Cov}[ g(X + c Y+z_\ell(\mu) \rangle)\sgn(Y+z_0(\mu)) ; X^2 ],
\end{align*}
where $X \equiv \langle v_\ell, W \rangle $ and $Y \equiv \langle \tilde{r},W\rangle $ are independent Gaussians with  mean 0, and $z_\ell(\mu) \equiv \<\tilde{u}_\ell,\tilde{\mu}\>$, $z_0(\mu)\equiv\<\tilde{r},\tilde{\mu}\>$. Hence,  $\tilde{c}_\ell = \mathrm{Cov}[F(X);X^2]$ where 
\begin{align*}
&F(x) = \expect_Y[g(x+cY+z_\ell(\mu))\sgn(Y+z_0(\mu))]\\
&= \int_{-z_0(\mu)}^\infty\!\!\!\!\!\!\!\!\!\!\!\!\!\!\! g(x\!+\!cy+\!z_\ell(\mu))\phi(y)dy \!- \!\!\int_{-\infty}^{-z_0(\mu)}\!\!\!\!\!\!\!\!\!\!\!\!\!\!\!\!\!g(x+cy+z_\ell(\mu)\phi(y)dy
\end{align*}
where $\phi$ the normal p.d.f.
Assume first that $\mu = 0$.  By~\eqref{fprops},  $g$ is anti-symmetric, i.e., $g(-x) = -g(x)$. Thus, $F(-x)=\expect_Y[g(-x+cY)\sgn(Y)]  
\stackrel{Y'\equiv -Y}{=} \expect_{Y'}[g(-x-cY')\sgn(-Y')]= F(x),$
i.e., $F$ is symmetric. 
Further, 
$F'(x) = \expect_y[g'(x+cY)\sgn(Y)]= \int_0^\infty (g'(x+cy)- g'(x-cy))\phi(y)dy$. 
The strict concavity of $g$ in $[0,\infty)$ implies that $g'$ is decreasing in $[0,+\infty)$, and the anti-symmetry of $g$ implies that $g'$ is symmetric. Take $x>0$: if $x>cy\geq 0$, $g'(x+cy)> g'(x-cy)$, while if $x\leq cy$, then $g'(x-cy) = g'(cy -x) > g'(cy+x) $, so $F'(x)$ is negative for $x>0$. By the symmetry of $F$, $F'(x)$ is positive for $x<0$. As such,  $F(x) =G(x^2)$ for some strictly decreasing $G$, and $\tilde{c}_\ell = \mathrm{Cov}(G(Z);Z)$ for $Z=X^2$; hence, $\tilde{c}_\ell<0$, for all $\ell\in [k]$.

To see that $\tilde{c}_\ell\neq 0$ for generic $\mu$, recall that $f$ is analytic and hence so is $g$. Hence, $\tilde{c}_\ell$ is an analytic function of $\mu$, for every $\ell\in [k]$; also, as $\tilde{c}_\ell(\mu)<0$  for $\mu=0$, it is not identically 0. Hence, the sets $\{\mu\in \reals^d: \tilde{c}_\ell(\mu)=0 \}$, $\ell\in [k]$, have Lebesgue measure 0 (see, e.g., pg. 83 in \citep{analytic}), and so does their union $Z$. As such, $\tilde{c}_\ell\neq 0$ for generic $\mu$; if not, there exists a ball $B\subset \reals^d$  such that $B\cap Z$ has positive Lebesgue measure, a contradiction. 
\end{proof}

\begin{figure}[!t]
\techrep{\includegraphics[width=0.55\columnwidth]{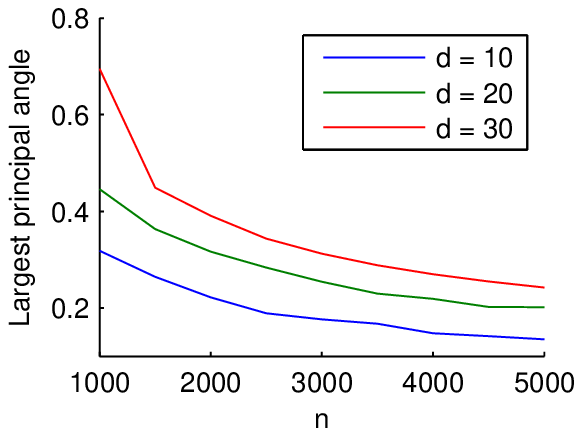}\!\!\!\!\!\!\!\!\!\!
\includegraphics[width=0.55\columnwidth]{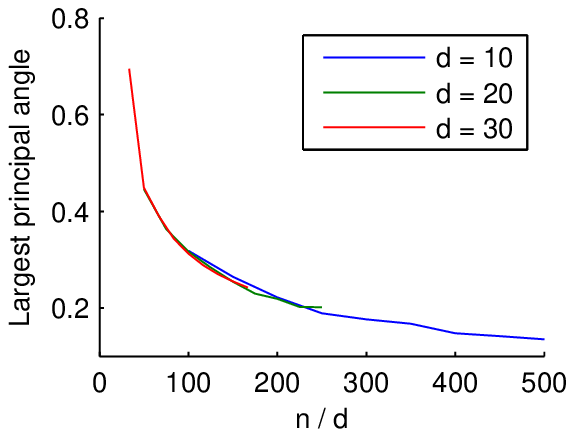}\!\!\!\!\!\!\\}{
\includegraphics[width=0.45\columnwidth]{PredictExpectLabelSubspaceLoss}\includegraphics[width=0.45\columnwidth]{PredictExpectLabelSubspaceLossScaled}\\
}
{\small \hspace*{\stretch{1}}(a) $\sin(d_P)$ vs.~$n$ \hspace*{\stretch{2}}(b) $\sin(d_P)$ vs.~$n/d$\hspace*{\stretch{1}}}
\vskip -0.1in
\caption{Convergence of $\hat{U}$ to $U$. }
\label{fig:subspace-loss}
\label{fig:scaled-subspace-loss}
\end{figure}

\begin{figure}
\techrep{\includegraphics[width=0.55\columnwidth]{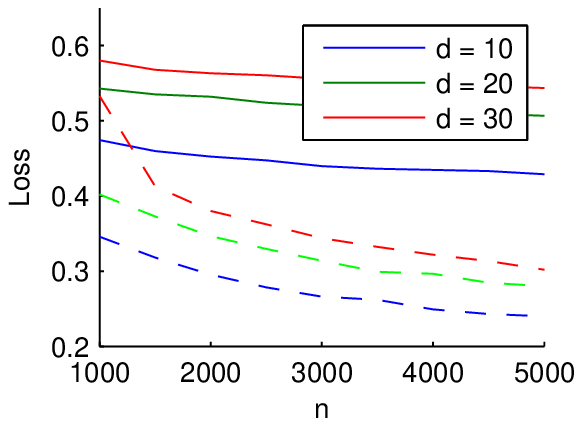}\!\!\!\!\!\!\!\!\!\!
\includegraphics[width=0.55\columnwidth]{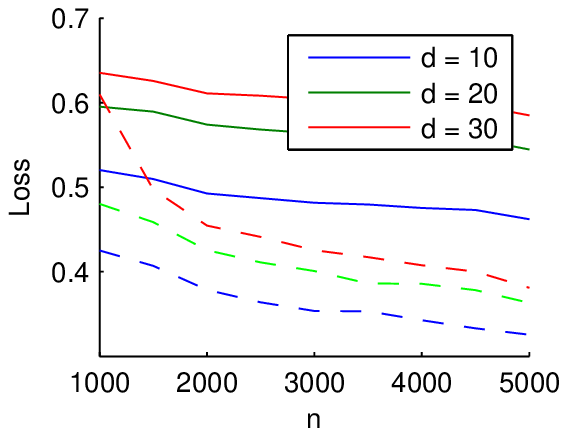}\!\!\!\!\!\!
\\}
{\includegraphics[width=0.45\columnwidth]{PredictExpectLabelLossSqrt}\includegraphics[width=0.45\columnwidth]{PredictExpectLabelLossLog}
\\}
{\small\hspace*{\stretch{1}}(a) $K=\sqrt{n}$ \hspace*{\stretch{2}}(b) $K = \log(n)$\hspace*{\stretch{1}}}
\vskip -0.1in
\caption{Predicting the expected label given features using $K$-NN (RMSE). Dotted lines are for $K$-NN after projecting the features $X_i$ onto   $\hat{U}$. }
\label{fig:knn-sqrt-loss}
\label{fig:knn-log-loss}
\end{figure}

Denote by $\lambda_0$ the eigenvalue of multiplicity $d-k$ in Lemma~\ref{lemma:spectrum}. Let $\Delta = \min_{\ell\in [k]}\abs{\lambda_0 - \lambda_\ell}$ be the gap between $\lambda_0$ and the remaining eigenvalues. Then, the following lemma holds; this, along with Lemma~\ref{lemma:spectrum}, yields Theorem~\ref{thm:main}. 
\begin{lemma} \label{thm:main2}
Let  $\hat{U}$ be our estimate for $U$. If $\lambda_1,\dots,\lambda_k$ are separated from $\lambda_{0}$ by at least $\Delta$, then for $\epsilon \le \min(\epsilon_0/\Delta,\frac14)$, we have
\begin{align*}
\Prob(d_P(U,\hat{U}) > \epsilon) 
\le C\exp\big(-F(\Delta\epsilon)\big),
\end{align*}
where $\epsilon_0$, $F$ are defined as per Lemma \ref{lem:q-large-deviation}.
\end{lemma}
\begin{proof}
If we ensure $\|\hat{Q} - Q\| \le \Delta/4$, then, by Weyl's theorem \citep{horn2012matrix}, $d-k$ eigenvalues of $\hat{Q}$ are contained in $[\lambda_{k+1} -\Delta/4, \lambda_{k+1} + \Delta/4]$, and the remaining eigenvalues are outside this set, and will be detected by \SM.  
Moreover, by the  Davis-Kahan $\sin(\theta)$ theorem, 
\begin{align*}
d_p(\mathrm{range}(Q),\mathrm{range}(\hat{Q})) &\le \frac{\|\hat{Q} \!-\! Q\|_2 }{\Delta - \|\hat{Q} \!-\! Q\|_2}
= \frac{1}{\frac{\Delta}{\|\hat{Q}\! - \!Q\|_2} \!-\! 1}.
\end{align*}
Thus the event $d_p(U,\hat{U}) \le \epsilon$ is implied by 
$
\|\hat{Q} - Q\|_2 \le \frac{\Delta\epsilon}{1 + \epsilon}\leq \Delta \epsilon.
$
 Moreover, this implies that sufficient condition for  $\|\hat{Q} - Q\|_2\le \Delta/4$ (which is required for \SM{} to detect the correct eigenvalues)  is that $\epsilon \le \frac14$. The lemma thus follows from Lemma~\ref{lem:q-large-deviation}.
\end{proof}
Note that the Gaussianity of $X$ is crucially used in the fact that the `whitened' features $W$ are uncorrelated, which in turn yields Eq.~\eqref{eq:breakdown}. We believe that the theorem can be extended to more general distributions, provided that the transform $\Sigma^{-\frac{1}{2}}$ de-correlates the coordinates of $X$. 

\section{Experiments}\label{sec:experiments}
\begin{figure*}[!t]
\includegraphics[width=0.32\textwidth]{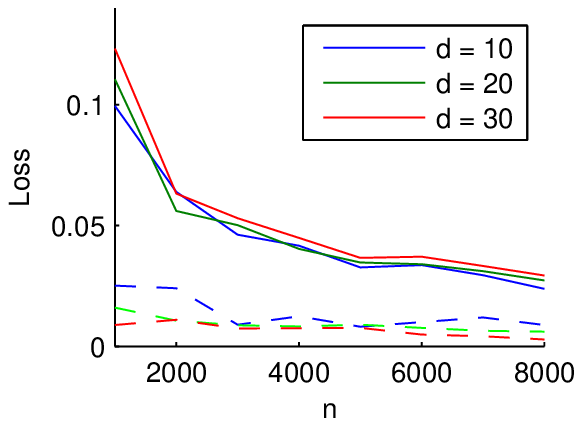}
\includegraphics[width=0.32\textwidth]{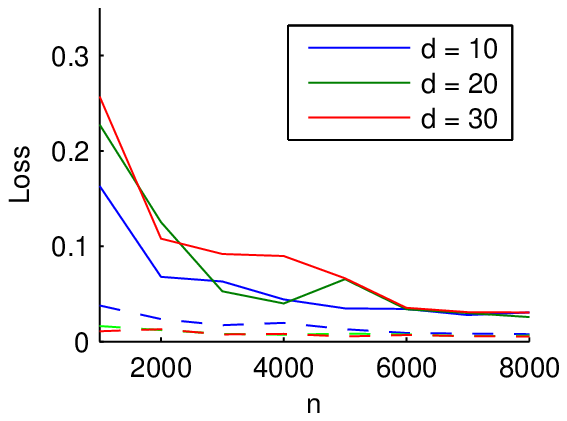}
\includegraphics[width=0.32\textwidth]{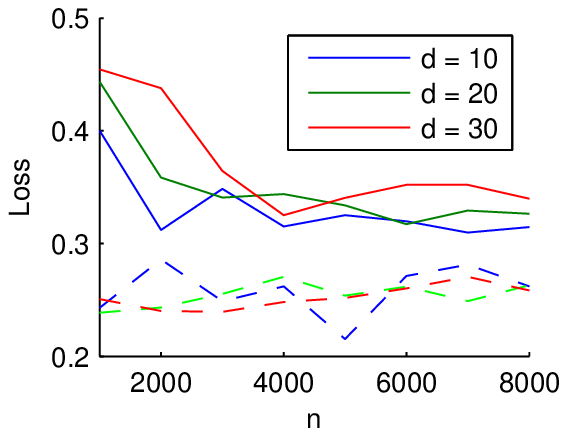}\\
{\techrep{\small}{\tiny}\hspace*{\stretch{1}}(a) EM Prediction (close to gr.~truth) \hspace*{\stretch{2}}(b)  EM Prediction (random)\hspace*{\stretch{2}} (c) Clustering (random) \hspace*{\stretch{1}}}
\caption{(a) Predicting the label given features and the classifier using using EM (normalized 0-1 loss) from a starting point close to ground truth.  Dotted lines are for kNN after projecting the features onto the estimated subspace. (b) Predicting the label given features and the classifier using using EM (normalized 0-1 loss) from a random starting point. (c) Predicting the classifier given features and the label.
}\label{fig:em-loss}\label{fig:em-rand-loss}
\label{fig:identify-users-loss}
\label{fig:identify-users-rand-loss}
\vskip -0.2in
\end{figure*}

We conduct computational experiments to validate the performance of \textsc{SpecralMirror} on subspace estimation, prediction, and clustering. We generate 
synthetic data using  $k = 2$, with profiles  $u_\ell\sim\cN(0,I),\,\ell = 1,2$ and mixture weights $p_\ell$  sampled uniformly at random from  the $k$-dimensional simplex. Features are also Gaussian: $X_i \sim\cN(0,I),\,i = 1,\dots,n$; labels generated by the $\ell$-th classifier are given by  $y_i = \sign(u_\ell^TX_i),\,i = 1,\dots,n$. 

\noindent\textbf{Convergence.} We study first how well \SM estimates the span $U$. Figure 
\ref{fig:subspace-loss}(a) shows the convergence of 
$\hat{U}$ to $U$ in terms of (the sin of) the 
largest principal angle between the subspaces versus the sample size $n$. We also plot the convergence versus the 
effective sample size $n / d$ (Figure \ref{fig:scaled-subspace-loss}(a)).  The curves for different values of $d$
align in Figure \ref{fig:scaled-subspace-loss}, indicating that the upper bound in Thm.~\ref{thm:main} correctly predicts the sample complexity as $n\approx \Theta(d)$.
Though not guaranteed by Theorem~\ref{thm:main}, in all experiments $r$ was indeed spanned by $\hat{U}$, so the addition of $\hat{r}$ to $\hat{U}$ was not necessary.

\noindent\textbf{Prediction through $K$-NN.} Next, we use the estimated subspace to aid in the prediction of expected labels. 
Given a new feature vector $X$, we use the average label of its $K$ nearest
neighbors ($K$-NN) in the training set to predict its expected label. We do this for two settings: once over the raw data (the `ambient' space), and once over data for which the features $X$ are first projected to $\hat{U}$, the estimated span (of dimension 2).  For each $n$, we repeat this procedure 25 times with $K=\sqrt{n}$ and $K=\log n$.  We record the average  root mean squared error between predicted and expected labels over the 25 runs. Figures \ref{fig:knn-sqrt-loss}(a) and \ref{fig:knn-log-loss}(b) show that, despite  
the error in $\hat{U}$, using $K$-NN on this subspace outperforms $K$-NN
on the ambient space.



\noindent\textbf{Prediction and Clustering through EM.} We next study the performance of prediction and clustering using the Expectation-Maximization (EM) algorithm.
We use EM to fit the individual profiles 
both over the training set, as well as on the dataset projected to the estimated subspace $\hat{U}$. 
 We conducted two 
experiments in this setting:
(a) initialize EM close to the true  profiles $u_\ell$, $\ell\in [k]$, and (b) randomly initialize EM and choose the best set of profiles from 30 
runs. For each $n$ we run EM 10 times. 

The first set of prediction experiments, we  again compare expected labels to the predicted labels, using for the latter  profiles $u_\ell$ and mixture probabilities $p_\ell$ as estimated by EM.
  Figure~\ref{fig:em-loss}(a) measures the statistical efficiency of EM over the 
estimated subspace versus EM over the ambient space, when EM is initialized close to the true profiles.
The second set of experiments, illustrated in Figure~\ref{fig:em-rand-loss}(b), aims to capture the additional improvement due to the reduction in the number of 
local minima in the reduced space. In both cases we see that constraining the estimated profiles to lie in the estimated subspace improves the statistical efficiency of EM; in the more realistic random start experiments, enforcing the subspace constraint also improves the performance of EM by reducing the number of local minima. We also observe an overall improvement compared to prediction through $K$-NN.

Finally, we use the fitted profiles $u_\ell$ to identify the classifier generating a label given the features and the label. To do this, once the profiles $u_\ell$ have been detected by EM, we use a logistic model margin condition to identify the classifier who generated a label,  given the label and its features. Figure \ref{fig:identify-users-rand-loss}(c) shows the result for  EM initialized at a random point, after choosing the best set of profiles from out of 30 runs. We evaluate the performance of this clustering procedure using the normalized 0-1 loss. Again, constraining the estimated profiles to the estimated subspace significantly improves the performance of this clustering task. 



\section{Conclusions}\label{sec:conclusions}
We have proposed \SM, a method for discovering the span of a mixture of linear classifiers. Our method relies on a non-linear transform of the labels, which we refer to as `mirroring'. Moreover, we have provided consistency guarantees and non-asymptotic bounds, that also imply the near optimal statistical efficiency of the method. Finally, we have shown that, despite the fact that \SM discovers the span only approximately, this is sufficient to allow for a significant improvement in both prediction and clustering, when the features are projected to the estimated span.

We have already discussed several technical issues that remain open, and that we believe are amenable to further analysis. These include 
 amending the Gaussianity assumption, and applying our bounds to other pHd-inspired methods.
 An additional research topic is to further improve the computational complexity of the estimation of the eigenvectors of the `mirrored' matrix $\hat{Q}$. This is of greatest interest  in cases where the covariance $\Sigma$ and mean $\mu$ are a priori known. This would be the case when, e.g., the method is applied repeatedly  and, although the features $X$ are sampled from the same distribution each time, labels $Y$ are generated from a different mixture of classifiers. In this case, $\SM$ lacks the pre-processing step, that requires estimating $\Sigma$ and is thus computationally intensive; the remaining operations amount to discovering the spectrum of $\hat{Q}$, an operation that can be performed more efficiently. For example, we can use a regularized M-estimator to exploit the fact that $\Sigma^{-1/2}\hat{Q}\Sigma^{-1/2}$ should be the sum of a multiple of the identity and a low rank matrix---see, e.g., \citet{negahban2012unified}.

\bibliography{references}
\bibliographystyle{icml2014}


\newpage
\appendix

\section{A Large-Deviation Lemma}

We first prove a Bernstein-type inequality for sub-Gaussian random vectors, that we shall use in our proofs:
\begin{lemma}
\label{lem:large-deviation-sub-Gaussian-vector}
Let $X\in\reals^d$ be a sub-Gaussian random vector, \ie\ $\langle a,X\rangle$ is sub-Gaussian for any $a\in\reals^d$. Then there exist universal constants $c_1,c_2$ such that
\begin{align*}
\Prob(\norm{X}_2 \ge t)& \le\\ c_1\exp\bigg(-\min&\left\{\frac{c_2(t^2 - d\norm{\Sigma}_2)}{4d\norm{X}_{\psi_2}^2},\frac{(t^2 - d\norm{\Sigma}_2)^2}{64c_2\norm{X}_{\psi_2}^4}\right\}\bigg).
\end{align*}
\end{lemma}

\begin{proof}
By the (exponential) Markov inequality, we have
\BEQ\begin{split}
\Prob(\norm{X}_2 \ge t) = \Prob(\exp(\lambda\norm{X}_2^2) \ge \exp(\lambda t^2))\\ \le \frac{\Expect[\exp(\lambda\norm{X}_2^2)]}{\exp(\lambda t^2)}.\end{split}
\label{eq:large-deviation-sub-Gaussian-vector-1}
\EEQ
Let $Z$ be uniformly distributed on the unit sphere $\bS^{d-1}$. Then $\sqrt{d}Z$ is isotropic so $d\Expect[\langle Z,a\rangle^2] = \norm{a}_2^2$ for all $a$. We use this fact to bound the m.g.f. of $\norm{X}_2^2$:
\begin{align*}
\Expect[\exp(\lambda\norm{X}_2^2)] &= \Expect_X[\exp(\lambda d\Expect_{Z}[\langle Z,X\rangle^2])] \\
&\le  \Expect_X[\Expect_{Z}[\exp(\lambda d\langle Z,X\rangle^2)]].
\end{align*}
We interchange the order of expectation to obtain
\begin{align*}
\Expect[\exp(\lambda\norm{X}_2^2)] &\le \Expect_Z[\Expect_{X}[\exp(\lambda d\langle X,Z\rangle^2)]] \\
&\le \sup\,\{\Expect_{X}[\exp(\lambda d\langle X,Z\rangle^2)]\mid z\in\bS^{d-1}\} .
\end{align*}
$\langle X, z\rangle$ is sub-Gaussian (for fixed $z$) so $\langle X, z\rangle^2$ is (noncentered) sub-exponential. If $\lambda d < c/\norm{\langle X, z\rangle^2 - \Expect[\langle X, z\rangle^2]}_{\psi_1}$, then
\begin{align}
&\Expect[\exp(\lambda d\langle X, z\rangle^2)]\nonumber \\
&\le \exp(\lambda d\Expect[\langle X, z\rangle^2])\Expect[\exp(\lambda d(\langle X, z\rangle^2 - \Expect[\langle X, z\rangle^2]))] \nonumber\\
&\le \exp(\lambda d\Expect[\langle X, z\rangle^2] + c d^2\lambda^2\norm{\langle X, z\rangle^2 - \Expect[\langle X, z\rangle^2]}_{\psi_1}^2)\nonumber \\
&\le \exp(\lambda d\Expect[\langle X, z\rangle^2] + 4c d^2\lambda^2\norm{\langle X, z\rangle^2}_{\psi_1}^2) \nonumber \\
&\le \exp(\lambda d\Expect[\langle X, z\rangle^2] + 16c d^2\lambda^2\norm{\langle X, z\rangle}_{\psi_2}^4).
\label{eq:mgf-bound}
\end{align}
We substitute this bound into \eqref{eq:large-deviation-sub-Gaussian-vector-1} to obtain
$$
\Prob(\norm{X}_2 \ge t) \le\\ \exp(16c d^2\lambda^2\norm{X}^4_{\psi_2} + \lambda (d\norm{\Sigma}_2 - t^2)),
$$
where $\Sigma$ is the covariance matrix of $X$. We optimize over $\lambda$ to obtain
$$
\Prob(\norm{X}_2 \ge t) \le e^{-\frac{(t^2 - d\norm{\Sigma}_2)^2}{64c d^2\norm{X}_{\psi_2}^4}}.
$$
If the optimum lies outside the region where the m.g.f.\ bound holds \eqref{eq:mgf-bound}, we can always choose 
$$
\lambda = \frac{c}{4d\norm{X}_{\psi_2}^2} \le \frac{c}{d\norm{\langle X, z\rangle^2 - \Expect[\langle X, z\rangle^2]}_{\psi_1}}
$$
in to obtain the tail bound:
$$
\Prob(\norm{X}_2 \ge t) \le e^{c^3 + \frac{c(d\norm{\Sigma}_2 - t^2)}{4d\norm{X}_{\psi_2}^2}}.
$$
We combine these two bounds to obtain
$$
\Prob(\norm{X}_2 \ge t) \le c_1e^{-\min\left\{\frac{c_2(t^2 - d\norm{\Sigma}_2)}{4d\norm{X}_{\psi_2}^2},\frac{(t^2 - d\norm{\Sigma}_2)^2}{64c_2\norm{X}_{\psi_2}^4}\right\}}.
$$
Note that the $t^2$ bound always holds. However, for small $t$, the $t^4$ term yields a tighter bound.
\end{proof}

\section{Proof of Lemma~{\protect{\lowercase{\ref{lemma:hatr_convergence}}}} (Weak Conv.~of $\hat{r}$)}\label{app:hatr}
\begin{proof}
We expand $\hat{r}_n - r$ (and neglect higher order terms) to obtain
\begin{align}
\norm{\hat{r}_n - r}_2 &= \Bigl\|\frac{1}{n}\sum_{i=1}^n\hat{\Sigma}^{-1} Y_i(X_i - \hat{\mu}) - \Sigma^{-1}\Expect[s(X)(X-\mu)]\Bigr\|_2\nonumber \\
&\le \norm{\hat{\Sigma}^{-1}}_2\Bigl\|\frac{1}{n}\sum_{i=1}^n Y_i(X_i - \hat{\mu}) - \Expect[s(X)(X-\mu)]\Bigr\|_2 \nonumber\\
&\pc+\|\Expect[s(X)(X-\mu)]\|_2\|\hat{\Sigma}^{-1} - \Sigma^{-1}\|_2 + (o_P(1))^2.
\label{eq:r-1}
\end{align}
The higher order terms generically look like
\BEQ
\Prob(\abs{X - \Expect[X]}\abs{Y - \Expect[Y]} > \epsilon).
\label{eq:composite-terms}
\EEQ
We apply the union bound to deduce
\begin{align*}
&\Prob(\abs{X - \Expect[X]}\abs{Y - \Expect[Y]} > \epsilon) \\
&\pc\le \Prob(\abs{X - \Expect[X]} > \sqrt{\epsilon}) +  \Prob(\abs{X - \Expect[X]} > \sqrt{\epsilon}).
\end{align*}
For any $\epsilon <  1$, $\sqrt{\epsilon} > \epsilon$ and we have
$$
\Prob(\abs{X - \Expect[X]} > \sqrt{\epsilon}) \le \Prob(\abs{X - \Expect[X]} > \epsilon).
$$
Since terms of the form $\Prob(\abs{X - \Expect[X]} > \epsilon)$ appear in the upper bounds we derive, we can handle terms like \eqref{eq:composite-terms} with a constant factor (say 2). Since our bounds involve multiplicative constant factors anyways, we neglect these terms  to simplify our derivation.

We expand the first term to obtain
\begin{align*}
&\Bigl\|\frac{1}{n}\sum_{i=1}^n Y_i(X_i - \hat{\mu}) -\Expect[Y(X-\mu)]\Bigr\|_2 \\
&\pc\le |\Expect[s(X)]|\norm{\hat{\mu} - \mu}_2 + \norm{\mu}_2\Bigl|\frac{1}{n}\sum_{i=1}^n Y_i - \Expect[s(X)]\Bigr|_2 \\
&\pc\pc+ \Bigl\|\frac{1}{n}\sum_{i=1}^n Y_i(X_i - \mu) - \Expect[s(X)(X-\mu)]\Bigr\|_2 \\
&\pc\pc + (o_P(1))^2.
\end{align*}
$\hat{\mu} - \mu$ is a sub-Gaussian random variable with sub-Gaussian norm $\frac{\norm{X}_{\psi_2}}{\sqrt{n}}$, so there exist universal $c_1$ and $c_2$ s.t.\
$$
\Prob(\norm{\hat{\mu} - \mu}_2 > t) \le c_1\exp\left(-\frac{c_2n(t^2 - d\norm{\Sigma}_2)}{4d\norm{X}_{\psi_2}^2}\right).
$$ 
$Y$ is bounded between 1 and -1, so 
\BNUM
\item We can use Chernoff's inequality to deduce
$$
\Prob\biggl(\Bigl|\frac{1}{n}\sum_{i=1}^n Y_i - \Expect[s(X)]\Bigr| > t \bigr) \le 2\exp(-nt^2/2).
$$
\item $Y_i(X_i - \mu)$ are sub-Gaussian. Thus there exist universal $c_1$ and $c_2$ such that
\begin{align*}\begin{split}
\Prob\biggl(\Bigl\|\frac{1}{n}\sum_{i=1}^n Y_i(X_i - \mu) - \Expect[s(X)(X - \mu)]\Bigr\|_2 > t \bigr) \le\\ c_1\exp\left(-\frac{c_2n(t^2 - d\norm{\Sigma}_2)}{4d\norm{X}_{\psi_2}^2}\right).\end{split}
\end{align*}
\ENUM

We expand the second term in \eqref{eq:r-1} to obtain
\begin{align*}
\|\hat{\Sigma}^{-1} - \Sigma^{-1}\| = \|\Sigma^{-1/2}\|\|\Sigma^{1/2}\hat{\Sigma}^{-1}\Sigma^{1/2} - I\|\|\Sigma^{-1/2}\|.
\end{align*}
We expand the middle term to obtain
\begin{align*}
&\|\Sigma^{-1/2}\hat{\Sigma}\Sigma^{-1/2} - I\| \\
&\pc\le \Bigl\|\Bigl(\frac1n\sum_{i=1}^n \Sigma^{-1/2}(X_i - \mu)(X_i - \mu)^T\Sigma^{-1/2}\Bigr)^{-1} - I\Bigr\|_2 \\
&\pc\pc+2\norm{\mu}_2\norm{\hat{\mu} - \mu}_2 + (o_P(1))^2
\end{align*}
We use Theorem 5.39 in \citet{vershynin2010introduction} to bound the first term:
$$
\Prob\biggl(\Bigl\|\frac1n\sum_{i=1}^n W_iW_i^T - I\Bigr\|_2 >  t\Bigr) \le 2\exp(-c_1'(\sqrt{n}t - c_2'\sqrt{d})^2),
$$
where $c_1',c_2'$ depend on the sub-Gaussian norm of $W$. We substitute these bounds into our expression for $\|\hat{r} - r\|_2$ to deduce
$$\textstyle
\Prob(\|\hat{r}\! -\! r\|_2 \ge \epsilon) \le Ce^{-\min\left\{\frac{c_1n\epsilon^2}{d},\bigl(c_1'\sqrt{n}\epsilon - c_2'\sqrt{d}\bigr)^2\right\}
},$$
where $C$ is an absolute constant and $c_1, c_1',c_2'$ depend on the sub-Gaussian norm of $X$. 
\end{proof}

\section{Proof of Lemma {\protect{\lowercase{\ref{lem:q-large-deviation}}}}  (Weak Conv.~of $\hat{Q}$)}
\label{app:lemmQ}

Let $\tilde{r}$ denote the projection of $r$ onto $U$.

\begin{lemma}
\label{lem:r-lemma}
If $$\|\hat{r} - r\|_2 \le \epsilon_0 = \min\,\{\alpha_1,\ldots,\alpha_k\}\sigma_{\min}(U),$$ then $\tilde{r}$ lies in the interior of the positive cone spanned by $(u_\ell)_{\ell\in [k]}$, where $\sigma_{\min}(U)$ is the smallest nonzero singular value of $U$. 
\end{lemma}

\begin{proof}
$r$ lies in interior of the conic hull of $\{u_1,\dots,u_k\}$, so we can express $r$ as $\sum_{i=1}^k\alpha_iu_i$, where $r_i > 0$. If $\tilde{r}$ also lies in the conic hull, then $\tilde{r} = \sum_{i=1}^k\beta_iu_i$ for some $\beta_i > 0$. Then
\begin{align*}
&\norm{\tilde{r} - r}_2 = \|\sum_{i=1}^k(\alpha_i - \beta_i)u_i\|_2 = \sqrt{(\alpha - \beta)^TUU^T(\alpha - \beta)} \\
&\ge \norm{\alpha - \beta}_2 \sigma_{\min}(U) \ge \norm{\alpha - \beta}_\infty \sigma_{\min}(U).
\end{align*}
To ensure $\beta$ is component-wise positive, we must have $\norm{\alpha - \beta}_\infty < \min\{\alpha_1,\dots,\alpha_k\}$. A sufficient condition is
$\norm{\tilde{r} - r}_2 \leq \epsilon_0 \le \min\{\alpha_1,\dots,\alpha_k\}\sigma_{\min}(U)$.
\end{proof}

We are now ready to prove Lemma 3. We expand $\|\hat{Q}_n - Q\|_2$ (and neglect higher order terms) to obtain
\begin{align*}
\|\hat{Q}_n &- Q\|_2 \le\\& \Bigl\|\frac1n\sum_{i=1}^nZ_i\Sigma^{-1/2}(X_i - \mu)(X_i - \mu)^T\Sigma^{-1/2} - Q\Bigr\|_2 \\
&\pc + 2\|\hat{\Sigma}^{-1/2} - \Sigma^{-1/2}\|_2\Expect\bigl[\|(X-\mu)(X-\mu)\Sigma^{-1/2}\|_2\bigr] \\
&\pc + 2\|\Sigma^{-1/2}\|_2\|\hat{\mu} - \mu\|_2\Expect\bigl[\|(X-\mu)\Sigma^{-1/2}\|_2\bigr] \\
&\pc + (o_P(1))^2.
\end{align*}
The second and third terms can be bounded using the same bounds used in the analysis of how fast $\hat{r}$ converges to $r$. Thus we focus on how fast 
$$
\sum_{i=1}^n Z_i\Sigma^{-1/2}(X_i - \mu)(X_i - \mu)^T\Sigma^{-1/2} 
$$ 
coverges to $Q$. 
Let $\epsilon'=\min(\epsilon_0,\frac{r}{2})$.
First, we note that
\begin{align}
&\Prob\biggl(\Bigl\|\frac1n\sum_{i=1}^n Z_iW_iW_i^T - Q\Bigr\|_2 > t \Bigr)\label{bayes} \\
&\pc\le \Prob\biggl(\Bigl\|\frac1n\sum_{i=1}^n Z_iW_iW_i^T - Q\Bigr\|_2 > t\mid\hat{r}\in B_{\epsilon'}(r) \Bigr) \Prob(\hat{r}\in B_{\epsilon'}(r))\nonumber \\
&\pc\pc+ \Prob(\hat{r}\notin B_{\epsilon'}(r)).\nonumber
\end{align}
Let $\tilde{Z}_i$ denote the ``corrected'' version of the $Z_i$'s, \ie\ the $Z_i$'s we obtain if we use the projection of $\hat{r}$ onto $U$ to flip the labels, and $W_i$ denote $\Sigma^{-1/2}(X_i-  \mu)$. We have
\begin{align}
\label{eq:corrected-z}
&\Bigl\|\frac1n\sum_{i=1}^n Z_iW_iW_i^T - Q\Bigr\|_2  \\
&\pc\le \Bigl\|\frac1n\sum_{i=1}^n \tilde{Z}_iW_iW_i^T - Q\Bigr\|_2 + \Bigl\|\frac1n\sum_{i=1}^n (Z_i-\tilde{Z}_i)W_iW_i^T\Bigr\|_2.\nonumber
\end{align}
The probability the first term is large is bounded by
\begin{align}
&\Prob\biggl(\Bigl\|\frac1n\sum_{i=1}^n \tilde{Z}_iW_iW_i^T - Q\Bigr\|_2 > t \mid \hat{r}\in B_{\epsilon'}(r)\Bigr)\label{eq:first-term-large}
 \\
&\pc\le \sup_{\hat{r}\in B_{\epsilon_0}(r)}\Prob\biggl(\Bigl\|\frac1n\sum_{i=1}^n \tilde{Z}_iW_iW_i^T - Q\Bigr\|_2 > t\mid\hat{r}\in B_{\epsilon'}(r)\Bigr)\nonumber
\end{align}
The $Z_i$'s are independent of $W_i$'s because the $Z_i$'s were computed using $\hat{r}$ that was in turn computed independently of the $X_i$'s, as the former are computed on a different partition of $[n]$. Thus,  the sum in the r.h.s.~of \eqref{eq:corrected-z} is a sum of \iid\ r.v. and can be bounded by
\begin{align*}
&\sup_{\hat{r}\in B_{\epsilon_r}(r)}\Prob\biggl(\Bigl\|\frac1n\sum_{i=1}^n \tilde{Z}_iW_iW_i^T - Q\Bigr\|_2 > t\mid\hat{r}\in B_{\epsilon'}(r)\bigr) \\
&\pc\le 2\exp(-c_1(\sqrt{n}t - c_2\sqrt{d})^2),
\end{align*}
where $c_1,c_2$ depend on the sub-Gaussian norm of $\tilde{Z}W$. This is a consequence of Remark 5.40 in \citet{vershynin2010introduction}.

We now focus on bounding the second term in \eqref{eq:corrected-z}. 
In what follows, without loss of generality,  we will restrict $W$ to the k+1 subspace spanned by $U$ and $\hat{r}$, as remaining components of the $W_i$'s do not contribute to the computation. Let $C_{\hat{r}}$ (for cone) be the ``bad'' region, \ie\ the region where $Z \ne \tilde{Z}$. We have
$$
\Bigl\|\frac1n\sum_{i=1}^n (Z_i-\tilde{Z}_i)W_iW_i^T\Bigr\|_2 =\Bigl\|\frac2n\sum_{i=1}^n \ones_{C_{\hat{r}}}(W_i)W_iW_i^T\Bigr\|_2.
$$
By the triangle inequality, we have
\begin{align}
&\Bigl\|\frac2n\sum_{i=1}^n \ones_{C_{\hat{r}}}(W_i)W_iW_i^T\Bigr\|_2 \nonumber \\
&\pc \le 2\norm{\Expect[\ones_{C_{\hat{r}}}(W)WW^T]}_2 + \Bigl\|\frac2n\sum_{i=1}^n &\ones_{C_{\hat{r}}}(W_i)W_iW_i^T\nonumber\\&\pc\pc- 2\Expect[\ones_{C_{\hat{r}}}(W)WW^T] \Bigr\|_2
\label{eq:triangle-inequality}
\end{align}
$\ones_{C_{\hat{r}}}$ is bounded, hence $\ones_{C_{\hat{r}}}(W_i)W_i$ is sub-Gaussian and
\begin{align}
\Prob\bigg(\Bigl\|\frac2n\sum_{i=1}^n &\ones_{C_{\hat{r}}}(W_i)W_iW_i^T\nonumber\\&- 2\Expect[\ones_{C_{\hat{r}}}(W)WW^T] \Bigr\|_2> t\mid\hat{r}\in B_{\epsilon'}(r)\bigg)\nonumber \\
&\pc\le 2\exp(-c_1(\sqrt{n}t - c_2\sqrt{d})^2),
\label{eq:high-prob-C}
\end{align}
where $c_1,c_2$ depend on the sub-Gaussian norm of $X$. It remains to bound $\norm{\Expect[\ones_{C_{\hat{r}}}(W)WW^T]}_2$. 

We use Jensen's inequality to obtain
\begin{align}
&\pc\norm{\Expect[\ones_{C_{\hat{r}}}(W)WW^T]}_2 \le \Expect[\ones_{C_{\hat{r}}}(W)\norm{WW^T}_2] \nonumber\\
&\pc\le C\Prob(W\in C_{\hat{r}}),
\label{eq:expression-C-P}
\end{align}
where the constant $C$ depends on $k$, as $W$ is restricted to the space spanned by $U$ and $\hat{r}$.
Finally, we bound $\Prob(W\in C_{\hat{r}})$. 

The distribution of $W_i$ is spherically symmetric so the probability that $W_i\in C_{\hat{r}}$ is proportional to the surface area of the set $C_{\hat{r}}\cap \bS^k$, where $\bS^k$ the unit sphere centered at the origin:
$$
C_{\hat{r}}\cap \bS^k = \{w\in\bS^k\mid \hat{r}^Tw \le 0, u_1^Tw, \dots, u_k^Tw \ge 0\}.
$$
Lemma~\ref{lemma:quad} implies that this set is contained in the set
\[
S = \{w\in\bS^k\mid \hat{r}^Tw \le 0, r^Tw \ge 0\}.
\]
By a symmetry argument, the volume of $S$ (according to the normalized measure on the unit sphere) is simply the angle between $\hat{r}$ and $r$, \ie
\[
\Prob(W\in C_{\hat{r}}) \le \arccos(\frac{\hat{r}^Tr}{\|\hat{r}\|\|r\|})
\]
Let $\epsilon_r\equiv\norm{\hat{r} - r}$. Recall that $\epsilon_r\leq r/2$, by conditioning; it can be verified that this implies $\frac{\hat{r}^Tr}{\|\hat{r}\|\|r\|}\ge 1 - \frac{2\epsilon_r}{\min( \|r\|,\|r\|^2)}$. Thus
\[
\Prob(W\in C_{\hat{r}}) \le  \arccos(1-c_{\|r\|} \epsilon_r)
\]
where $c_{\|r\|}$ depends on $\|r\|$. 
We substitute these bounds into \eqref{eq:expression-C-P} to obtain
\begin{align}
\norm{\Expect[\ones_{C_{\hat{r}}}(W)WW^T]}_2 \le C \arccos(1-c_{\|r\|}{\epsilon_r}).\label{crbound}
\end{align}
We combine \eqref{crbound} with \eqref{eq:high-prob-C} to deduce
\begin{align*}
&\Prob(\Bigl\|\frac1n\sum_{i=1}^n (Z_i-\tilde{Z}_i)W_iW_i^T\Bigr\|_2 > C\arccos(1-c_{\|r\|}\epsilon_r) + t\mid\hat{r}\in B_{\epsilon'}(r)) \\
&\pc\le 2\exp(-c_1(\sqrt{n}t - c_2\sqrt{d})^2)
\end{align*}
Using this expression and \eqref{eq:first-term-large}, a union bound on Ineq.~\eqref{eq:corrected-z} gives:
\begin{align}
&\Prob(\|\hat{Q} \!- \!Q\|_2 > C\arccos\big(1 \!-\!c_{\|r\|} \epsilon_r\big) + t\mid\hat{r}\in B_{\epsilon'}(r))\nonumber \\
&\pc\le C\exp\left(-\min\left\{\frac{c_1nt^2}{d},\bigl(c_1'\sqrt{n}t - c_2'\sqrt{d}\bigr)^2\right\}\right).\label{weird}
\end{align}
for appropriate constants $C$,$c_1$,$c_1'$,$c_2'$. Note that
\begin{align*}
&\Prob(\|\hat{Q} - Q\|_2 > \epsilon\mid\hat{r}\in B_{\epsilon'}(r))\Prob(\hat{r}\in B_{\epsilon'}(r)) \\
&\le \Prob(\|\hat{Q} - Q\|_2 > \epsilon \mid C\arccos(1-c_{\|r\|}\epsilon_r)\leq \frac{\epsilon}{2}, \hat{r}\in B_{\epsilon'}(r)) \\
&+ \Prob(\norm{\hat{r} - r}_2 > \frac{1}{c_{\|r\|}}(1- \cos(\epsilon/C))),
\end{align*}
Let 
$$
f(\epsilon) := \min\left\{\frac{c_1n\epsilon^2}{d},\bigl(c_1'\sqrt{n}\epsilon - c_2'\sqrt{d}\bigr)^2\right\}.
$$
The first term is bounded by $C\exp\left(-f(\epsilon/2)\right)$ and the second term is bounded by $C\exp\left(-f(     \frac{1}{c_{\|r\|}}(1- \cos(\epsilon/C))       )\right)$. We deduce
\begin{align*}
&\Prob(\|\hat{Q} - Q\|_2 > \epsilon\mid\hat{r}\in B_{\epsilon'}(r))\Prob(\hat{r}\in B_{\epsilon'}(r)) \\
&\pc\le C(\exp(-f( \frac{1}{c_{\|r\|}}(1- \cos(\epsilon/C)))) + \exp(-f(\epsilon/2)))\\
&\pc\le C\exp(-f(\min\{     \frac{1}{c_{\|r\|}}(1- \cos(\epsilon/C))        ,\epsilon/2\})).
\end{align*}
To complete the proof of Lemma \ref{lem:q-large-deviation}, one can similarly account for the event $\norm{\hat{r} - r}_2 \ge \epsilon'$ in \eqref{bayes}, finaly yielding:
\begin{align*}
&\Prob(\|\hat{Q} - Q\|_2 > \epsilon) \\
&\pc\le C\exp(-f(\min\{     \frac{1}{c_{\|r\|}}(1- \cos(\epsilon/C))        ,\epsilon/2, \epsilon'\})).
\end{align*}
The lemma thus follows from the fact that
$1-\cos(\epsilon) \approx \frac{1}{2}\epsilon^2$ for small enough $\epsilon>0$.

\section{Principal Hessian Directions}\label{app:phd}

In this section, we apply the principal Hessian directions (pHd) \citep{li1992principal} method to our setting, and demontrate its failure to discover the space spanned by parameter profile when $\mu =0$. Recall that pHd considers a setting in which  features $X_i\in \reals^d$ are i.i.d.~and normally distributed with mean $\mu$ and covariance $\Sigma$, while labels $Y_i\in R$ lie, in expectation, in a $k$-dimensional manifold. In particular, some smooth $h:\reals^k\to\reals$, $k\ll d$:
$$\expect[Y\mid X= x] = h(\<u_1,x\>,\ldots,\<u_k,x\>)$$
where $u_\ell\in \reals^d$, $\ell \in [k]$. The method effectively creates an estimate 
$$\hat{H} =
n^{-1}\sum_{i=1}^nY_i\, \Sigma ^{-\frac12}X_iX_i^T\Sigma^{-\frac12}\in\reals^{d\times d}
$$
of the Hessian
\begin{align}\begin{split}H=\expect[\nabla_x^2 h(\<u_1,X\>,\ldots,\<u_k,X\>)] =\\ U^T \expect[\nabla^2_v h(\<u_1,X\>,\ldots,\<u_k,X\>)]  U,\end{split}\label{pHd:hessian} \end{align} where $\nabla_v^2h$ is the Hessian of the mapping $v\mapsto h(v)$, for $v\in \reals^k$, and $U$ the matrix of profiles. As in our case, the method discovers $\linspan(u_1,\ldots,u_k)$ from the eigenvectors of the eigenvalues that ``stand out'', after appropriate rotation in terms of $\Sigma$.

 Unfortunately, in the case of linear classifiers, 
$$h(v) =  \sum_{\ell=1}^kp_\ell g(v_\ell) $$ 
for $g(s) = 2f(s)-1$ is anti-symmetric. As a result, $\nabla^2_v h$ is a diagonal matrix whose $\ell$-th entry in the diagonal is $g''(v_\ell)$. Since $g$ is anti-symmetric, so is $g''$. Hence, if $\mu = 0$ we have that $\expect[g''(\langle u_\ell,X \rangle)]=0$; hence, the Hessian $H$ given by \eqref{pHd:hessian} will in fact be zero, and the method will fail to discover any signal pertaining to $\linspan(U)$.

This calls for an application of the pHd method to a transform of the labels $Y$. This ought to be non-linear, as an affine transform would preserve the above property. Moreover, given that these labels are binary,  polynomial transforms do not add any additional signal to $Y$, and are therefore not much help in accomplishing this task. In contrast, the `mirrorring' approach that we propose  provides a means of transforming the labels so that their expectation indeed carries sufficient information to extract the span $U$, as evidenced by Theorem~1.


\end{document}


\setcounter{lemma}{6}
\twocolumn[
\icmltitle{Supplementary Material to ``Learning Mixtures of Linear Classifiers''}

\icmlauthor{Yuekai Sun}{yuekai@stanford.edu}
\icmladdress{Stanford University}
\icmlauthor{Stratis Ioannidis}{stratis.ioannidis@technicolor.com}
\icmladdress{Technicolor} 
\icmlauthor{Andrea Montanari}{montanar@stanford.edu}
\icmladdress{Stanford University} 


]

\section{A Large-Deviation Lemma}

We first prove a Bernstein-type inequality for sub-Gaussian random vectors, that we shall use in our proofs:
\begin{lemma}
\label{lem:large-deviation-sub-Gaussian-vector}
Let $X\in\reals^d$ be a sub-Gaussian random vector, \ie\ $\langle a,X\rangle$ is sub-Gaussian for any $a\in\reals^d$. Then there exist universal constants $c_1,c_2$ such that
\begin{align*}
\Prob(\norm{X}_2 \ge t)& \le\\ c_1\exp\bigg(-\min&\left\{\frac{c_2(t^2 - d\norm{\Sigma}_2)}{4d\norm{X}_{\psi_2}^2},\frac{(t^2 - d\norm{\Sigma}_2)^2}{64c_2\norm{X}_{\psi_2}^4}\right\}\bigg).
\end{align*}
\end{lemma}

\begin{proof}
By the (exponential) Markov inequality, we have
\BEQ\begin{split}
\Prob(\norm{X}_2 \ge t) = \Prob(\exp(\lambda\norm{X}_2^2) \ge \exp(\lambda t^2))\\ \le \frac{\Expect[\exp(\lambda\norm{X}_2^2)]}{\exp(\lambda t^2)}.\end{split}
\label{eq:large-deviation-sub-Gaussian-vector-1}
\EEQ
Let $Z$ be uniformly distributed on the unit sphere $\bS^{d-1}$. Then $\sqrt{d}Z$ is isotropic so $d\Expect[\langle Z,a\rangle^2] = \norm{a}_2^2$ for all $a$. We use this fact to bound the m.g.f. of $\norm{X}_2^2$:
\begin{align*}
\Expect[\exp(\lambda\norm{X}_2^2)] &= \Expect_X[\exp(\lambda d\Expect_{Z}[\langle Z,X\rangle^2])] \\
&\le  \Expect_X[\Expect_{Z}[\exp(\lambda d\langle Z,X\rangle^2)]].
\end{align*}
We interchange the order of expectation to obtain
\begin{align*}
\Expect[\exp(\lambda\norm{X}_2^2)] &\le \Expect_Z[\Expect_{X}[\exp(\lambda d\langle X,Z\rangle^2)]] \\
&\le \sup\,\{\Expect_{X}[\exp(\lambda d\langle X,Z\rangle^2)]\mid z\in\bS^{d-1}\} .
\end{align*}
$\langle X, z\rangle$ is sub-Gaussian (for fixed $z$) so $\langle X, z\rangle^2$ is (noncentered) sub-exponential. If $\lambda d < c/\norm{\langle X, z\rangle^2 - \Expect[\langle X, z\rangle^2]}_{\psi_1}$, then
\begin{align}
&\Expect[\exp(\lambda d\langle X, z\rangle^2)]\nonumber \\
&\pc\le \exp(\lambda d\Expect[\langle X, z\rangle^2])\Expect[\exp(\lambda d(\langle X, z\rangle^2 - \Expect[\langle X, z\rangle^2]))] \nonumber\\
&\pc\le \exp(\lambda d\Expect[\langle X, z\rangle^2] + c d^2\lambda^2\norm{\langle X, z\rangle^2 - \Expect[\langle X, z\rangle^2]}_{\psi_1}^2)\nonumber \\
&\pc\le \exp(\lambda d\Expect[\langle X, z\rangle^2] + 4c d^2\lambda^2\norm{\langle X, z\rangle^2}_{\psi_1}^2) \nonumber \\
&\pc\le \exp(\lambda d\Expect[\langle X, z\rangle^2] + 16c d^2\lambda^2\norm{\langle X, z\rangle}_{\psi_2}^4).
\label{eq:mgf-bound}
\end{align}
We substitute this bound into \eqref{eq:large-deviation-sub-Gaussian-vector-1} to obtain
$$
\Prob(\norm{X}_2 \ge t) \le \exp(16c d^2\lambda^2\norm{X}^4_{\psi_2} + \lambda (d\norm{\Sigma}_2 - t^2)),
$$
where $\Sigma$ is the covariance matrix of $X$. We optimize over $\lambda$ to obtain
$$
\Prob(\norm{X}_2 \ge t) \le \exp\left(-\frac{(t^2 - d\norm{\Sigma}_2)^2}{64c d^2\norm{X}_{\psi_2}^4}\right).
$$
If the optimum lies outside the region where the m.g.f.\ bound holds \eqref{eq:mgf-bound}, we can always choose 
$$
\lambda = \frac{c}{4d\norm{X}_{\psi_2}^2} \le \frac{c}{d\norm{\langle X, z\rangle^2 - \Expect[\langle X, z\rangle^2]}_{\psi_1}}
$$
in to obtain the tail bound:
$$
\Prob(\norm{X}_2 \ge t) \le \exp\left(c^3 + \frac{c(d\norm{\Sigma}_2 - t^2)}{4d\norm{X}_{\psi_2}^2}\right).
$$
We combine these two bounds to obtain
$$
\Prob(\norm{X}_2 \ge t) \le c_1\exp\left(-\min\left\{\frac{c_2(t^2 - d\norm{\Sigma}_2)}{4d\norm{X}_{\psi_2}^2},\frac{(t^2 - d\norm{\Sigma}_2)^2}{64c_2\norm{X}_{\psi_2}^4}\right\}\right).
$$
Note that the $t^2$ bound always holds. However, for small $t$, the $t^4$ term yields a tighter bound.
\end{proof}

\section{Proof of Lemma~1 (Weak Convergence of $\hat{r}$)}
\begin{proof}
We expand $\hat{r}_n - r$ (and neglect higher order terms) to obtain
\begin{align}
\norm{\hat{r}_n - r}_2 &= \Bigl\|\frac{1}{n}\sum_{i=1}^n\hat{\Sigma}^{-1} Y_i(X_i - \hat{\mu}) - \Sigma^{-1}\Expect[s(X)(X-\mu)]\Bigr\|_2 \\
&\le \norm{\hat{\Sigma}^{-1}}_2\Bigl\|\frac{1}{n}\sum_{i=1}^n Y_i(X_i - \hat{\mu}) - \Expect[s(X)(X-\mu)]\Bigr\|_2 \\
&\pc+\|\Expect[s(X)(X-\mu)]\|_2\|\hat{\Sigma}^{-1} - \Sigma^{-1}\|_2 + (o_P(1))^2.
\label{eq:r-1}
\end{align}
The higher order terms generically look like
\BEQ
\Prob(\abs{X - \Expect[X]}\abs{Y - \Expect[Y]} > \epsilon).
\label{eq:composite-terms}
\EEQ
We apply the union bound to deduce
\begin{align*}
&\Prob(\abs{X - \Expect[X]}\abs{Y - \Expect[Y]} > \epsilon) \\
&\pc\le \Prob(\abs{X - \Expect[X]} > \sqrt{\epsilon}) +  \Prob(\abs{X - \Expect[X]} > \sqrt{\epsilon}).
\end{align*}
For any $\epsilon <  1$, $\sqrt{\epsilon} > \epsilon$ and we have
$$
\Prob(\abs{X - \Expect[X]} > \sqrt{\epsilon}) \le \Prob(\abs{X - \Expect[X]} > \epsilon).
$$
Since terms of the form $\Prob(\abs{X - \Expect[X]} > \epsilon)$ appear in the upper bounds we derive, we can handle terms like \eqref{eq:composite-terms} with a constant factor (say 2). Since our bounds involve multiplicative constant factors anyways, we neglect these terms  to simplify our derivation.

We expand the first term to obtain
\begin{align*}
&\Bigl\|\frac{1}{n}\sum_{i=1}^n Y_i(X_i - \hat{\mu}) -\Expect[Y(X-\mu)]\Bigr\|_2 \\
&\pc\le |\Expect[s(X)]|\norm{\hat{\mu} - \mu}_2 + \norm{\mu}_2\Bigl|\frac{1}{n}\sum_{i=1}^n Y_i - \Expect[s(X)]\Bigr|_2 \\
&\pc\pc+ \Bigl\|\frac{1}{n}\sum_{i=1}^n Y_i(X_i - \mu) - \Expect[s(X)(X-\mu)]\Bigr\|_2 \\
&\pc\pc + (o_P(1))^2.
\end{align*}
$\hat{\mu} - \mu$ is a sub-Gaussian random variable with sub-Gaussian norm $\frac{\norm{X}_{\psi_2}}{\sqrt{n}}$, so there exist universal $c_1$ and $c_2$ s.t.\
$$
\Prob(\norm{\hat{\mu} - \mu}_2 > t) \le c_1\exp\left(-\frac{c_2n(t^2 - d\norm{\Sigma}_2)}{4d\norm{X}_{\psi_2}^2}\right).
$$ 
$Y$ is bounded between 1 and -1, so 
\BNUM
\item We can use Chernoff's inequality to deduce
$$
\Prob\biggl(\Bigl|\frac{1}{n}\sum_{i=1}^n Y_i - \Expect[s(X)]\Bigr| > t \bigr) \le 2\exp(-nt^2/2).
$$
\item $Y_i(X_i - \mu)$ are sub-Gaussian. Thus there exist universal $c_1$ and $c_2$ such that
\begin{align*}\begin{split}
\Prob\biggl(\Bigl\|\frac{1}{n}\sum_{i=1}^n Y_i(X_i - \mu) - \Expect[s(X)(X - \mu)]\Bigr\|_2 > t \bigr) \le\\ c_1\exp\left(-\frac{c_2n(t^2 - d\norm{\Sigma}_2)}{4d\norm{X}_{\psi_2}^2}\right).\end{split}
\end{align*}
\ENUM

We expand the second term in \eqref{eq:r-1} to obtain
\begin{align*}
\|\hat{\Sigma}^{-1} - \Sigma^{-1}\| = \|\Sigma^{-1/2}\|\|\Sigma^{1/2}\hat{\Sigma}^{-1}\Sigma^{1/2} - I\|\|\Sigma^{-1/2}\|.
\end{align*}
We expand the middle term to obtain
\begin{align*}
&\|\Sigma^{-1/2}\hat{\Sigma}\Sigma^{-1/2} - I\| 
&\pc\le \Bigl\|\Bigl(\frac1n\sum_{i=1}^n \Sigma^{-1/2}(X_i - \mu)(X_i - \mu)^T\Sigma^{-1/2}\Bigr)^{-1} - I\Bigr\|_2 \\
&\pc\pc+2\norm{\mu}_2\norm{\hat{\mu} - \mu}_2 + (o_P(1))^2
\end{align*}
We use Theorem 5.39 in \cite{vershynin2010introduction} to bound the first term:
$$
\Prob\biggl(\Bigl\|\frac1n\sum_{i=1}^n W_iW_i^T - I\Bigr\|_2 >  t\Bigr) \le 2\exp(-c_1'(\sqrt{n}t - c_2'\sqrt{d})^2),
$$
where $c_1',c_2'$ depend on the sub-Gaussian norm of $W$. We substitute these bounds into our expression for $\|\hat{r} - r\|_2$ to deduce
$$\textstyle
\Prob(\|\hat{r} - r\|_2 \ge \epsilon) \le C\exp\left(-\min\left\{\frac{c_1n\epsilon^2}{d},\bigl(c_1'\sqrt{n}\epsilon - c_2'\sqrt{d}\bigr)^2\right\}\right),
$$
where $C$ is an absolute constant and $c_1, c_1',c_2'$ depend on the sub-Gaussian norm of $X$. 
\end{proof}

\section{Proof of Lemma 3 (Weak Convergence of $\hat{Q}$)}

%

%

Let $\tilde{r}$ denote the projection of $r$ onto $\linspan\{u,v\}$.

\begin{lemma}
If $\|\hat{r} - r\|_2 \le \epsilon_0 = \min\,\{\alpha_1,\alpha_2\}\sin(\theta)$, then $\tilde{r}$ also lies in the positive quadrant. 
\end{lemma}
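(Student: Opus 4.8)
The plan is to reduce the claim to planar geometry in $\linspan\{u,v\}$ and to exploit that orthogonal projection is non-expansive; here $\tilde r$ is understood as the projection of the empirical estimate $\hat r$, so that the hypothesis $\norm{\hat r - r}_2\le\epsilon_0$ has bite. First I would invoke the structural fact established earlier (the quadrant lemma) that the population vector $r$ lies in $\linspan\{u,v\}$ and inside the positive cone, writing $r=\alpha_1 u+\alpha_2 v$ with $\alpha_1,\alpha_2\ge 0$, where $\theta$ is the angle between the unit vectors $u$ and $v$. Letting $P$ be the orthogonal projection onto $\linspan\{u,v\}$, we have $Pr=r$, and since $P$ is non-expansive,
\[
\norm{\tilde r - r}_2=\norm{P(\hat r - r)}_2\le\norm{\hat r - r}_2\le\epsilon_0 .
\]
It therefore suffices to show that a planar point within Euclidean distance $\epsilon_0$ of $r$ cannot leave the positive quadrant.

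The core of the argument is to identify $\epsilon_0=\min\{\alpha_1,\alpha_2\}\sin\theta$ as a lower bound on the distance from $r$ to the boundary of the cone inside the plane. I would make this quantitative through the dual basis $\{u^\ast,v^\ast\}$ of $\{u,v\}$, defined by $\<u^\ast,u\>=\<v^\ast,v\>=1$ and $\<u^\ast,v\>=\<v^\ast,u\>=0$. For $w=au+bv\in\linspan\{u,v\}$ the coordinates are recovered by $a=\<u^\ast,w\>$, $b=\<v^\ast,w\>$, and membership in the positive quadrant is precisely $a\ge0$ and $b\ge0$. The one computation that matters is that $u^\ast$ is the unit vector orthogonal to $v$ rescaled to satisfy $\<u^\ast,u\>=1$; since the component of $u$ orthogonal to $v$ has length $\sin\theta$, this forces $\norm{u^\ast}_2=1/\sin\theta$, and symmetrically $\norm{v^\ast}_2=1/\sin\theta$.

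Combining the two steps closes the proof. Writing $\tilde r=au+bv$, the first coordinate obeys
\[
a=\<u^\ast,\tilde r\>=\alpha_1+\<u^\ast,\tilde r-r\>\ge\alpha_1-\norm{u^\ast}_2\norm{\tilde r-r}_2,
\]
which is at least $\alpha_1-\epsilon_0/\sin\theta=\alpha_1-\min\{\alpha_1,\alpha_2\}\ge0$ by Cauchy--Schwarz and the bound from the first paragraph; the symmetric estimate gives $b\ge0$, so $\tilde r$ lies in the positive quadrant. I expect the main obstacle to be the middle step: pinning down that $\epsilon_0$ is the right threshold requires the dual-basis (equivalently, perpendicular-distance) computation and, crucially, correctly accounting for the non-orthogonality of $u$ and $v$ through the factor $1/\sin\theta$ --- a naive bound treating $\{u,v\}$ as orthonormal would lose this factor and give the wrong radius. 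The positivity hypothesis $\alpha_1,\alpha_2\ge0$ is exactly what makes $\min\{\alpha_1,\alpha_2\}\le\alpha_1$ usable in the final inequality.
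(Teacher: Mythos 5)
Your proof is correct, and it takes a genuinely different route from the paper's. The paper argues in the general $k$-component setting: it writes $\tilde r=\sum_{i}\beta_i u_i$, observes $\norm{\alpha-\beta}_\infty\le\norm{\alpha-\beta}_2\le\norm{\tilde r-r}_2/\sigma_{\min}(U)$, and concludes that $\epsilon_0\le\min_i\{\alpha_i\}\,\sigma_{\min}(U)$ suffices for all $\beta_i$ to stay positive. You instead work directly in the plane with the dual basis $\{u^\ast,v^\ast\}$ and the computation $\norm{u^\ast}_2=\norm{v^\ast}_2=1/\sin\theta$, getting the per-coordinate bound $|\alpha_i-\beta_i|\le\norm{\tilde r-r}_2/\sin\theta$. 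Two things your version buys: first, it recovers the threshold $\min\{\alpha_1,\alpha_2\}\sin\theta$ stated in the lemma \emph{exactly}, whereas the paper's argument only justifies the radius $\min_i\{\alpha_i\}\,\sigma_{\min}(U)$, and for two unit vectors at an acute angle $\sigma_{\min}(U)=\sqrt{1-\cos\theta}<\sin\theta$, so the paper's proof as written does not quite match its own stated constant; second, you make explicit the step the paper elides, namely that $\norm{\tilde r-r}_2\le\norm{\hat r-r}_2$ by non-expansiveness of the orthogonal projection together with $Pr=r$ (the paper simply writes $\epsilon_0=\norm{\tilde r-r}_2$, and its phrasing ``if $\tilde r$ lies in the conic hull, then $\tilde r=\sum_i\beta_iu_i$ with $\beta_i>0$'' reads as assuming the conclusion, where it should say $\tilde r$ lies in the span so the $\beta_i$ exist with unknown sign). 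What the paper's approach buys in exchange is generality: the singular-value bound extends verbatim to $k>2$ profiles, while your $1/\sin\theta$ computation is specific to two vectors (its $k$-vector analogue would be the row norms of the pseudo-inverse $U^+$, which is essentially how the two arguments reconcile, since $\max_i\norm{u_i^\ast}_2\le 1/\sigma_{\min}(U)$). One shared cosmetic caveat: with the non-strict hypothesis $\norm{\hat r-r}_2\le\epsilon_0$ both arguments only give $\beta_i\ge 0$, i.e.\ the closed quadrant; strict interiority requires strict inequality somewhere, an issue the paper's proof has as well.
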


\begin{proof}
$r$ lies in interior of the conic hull of $\{u_1,\dots,u_k\}$, so we can express $r$ as $\sum_{i=1}^k\alpha_iu_i$, where $r_i > 0$. If $\tilde{r}$ also lies in the conic hull, then $\tilde{r} = \sum_{i=1}^k\beta_iu_i$ for some $\beta_i > 0$. 
\begin{align*}
\epsilon_0 &= \norm{\tilde{r} - r}_2 = \norm{\sum_{i=1}^k(\alpha_i - \beta_i)u_i}_2 = \sqrt{(\alpha - \beta)UU^T(\alpha - \beta)} \\
&\ge \norm{\alpha - \beta}_2 \sigma_{\min}(U) \ge \norm{\alpha - \beta}_\infty \sigma_{\min}(U).
\end{align*}
To ensure $\beta$ is component-wise positive, we must have $\norm{\alpha - \beta}_\infty < \min\{\alpha_1,\dots,\alpha_k\}$. A sufficient condition is
$\epsilon_0 \le \min\{\alpha_1,\dots,\alpha_k\}\sigma_{\min}(U)$.
\end{proof}

We are now ready to prove Lemma 3. We expand $\|\hat{Q}_n - Q\|_2$ (and neglect higher order terms) to obtain
\begin{align*}
\|\hat{Q}_n - Q\|_2 &\le \Bigl\|\frac1n\sum_{i=1}^nZ_i\Sigma^{-1/2}(X_i - \mu)(X_i - \mu)^T\Sigma^{-1/2} - Q\Bigr\|_2 \\
&\pc + 2\|\hat{\Sigma}^{-1/2} - \Sigma^{-1/2}\|_2\Expect\bigl[\|(X-\mu)(X-\mu)\Sigma^{-1/2}\|_2\bigr] \\
&\pc + 2\|\Sigma^{-1/2}\|_2\|\hat{\mu} - \mu\|_2\Expect\bigl[\|(X-\mu)\Sigma^{-1/2}\|_2\bigr] \\
&\pc + (o_P(1))^2.
\end{align*}
The second and third terms can be bounded using the same bounds used in the analysis of how fast $\hat{r}$ converges to $r$. Thus we focus on how fast 
$$
\sum_{i=1}^n Z_i\Sigma^{-1/2}(X_i - \mu)(X_i - \mu)^T\Sigma^{-1/2} 
$$ 
coverges to $Q$. First, we note that
\begin{align*}
&\Prob\biggl(\Bigl\|\frac1n\sum_{i=1}^n Z_iW_iW_i^T - Q\Bigr\|_2 > t \Bigr) \\
&\pc\le \Prob\biggl(\Bigl\|\frac1n\sum_{i=1}^n Z_iW_iW_i^T - Q\Bigr\|_2 > t\mid\hat{r}\in B_{\epsilon_0}(r) \Bigr) \\
&\pc\pc+ \Prob(\hat{r}\in B_{\epsilon_0}(r)).
\end{align*}
Let $\tilde{Z}_i$ denote the ``corrected'' version of the $Z_i$'s, \ie\ the $Z_i$'s we obtain if we use the projection of $\hat{r}$ onto the $\linspan\{u,v\}$ to flip the labels, and $W_i$ denote $\Sigma^{-1/2}(X_i-  \mu)$. We have
\begin{align}
&\Bigl\|\frac1n\sum_{i=1}^n Z_iW_iW_i^T - Q\Bigr\|_2 \\
&\pc\le \Bigl\|\frac1n\sum_{i=1}^n \tilde{Z}_iW_iW_i^T - Q\Bigr\|_2 + \Bigl\|\frac1n\sum_{i=1}^n (Z_i-\tilde{Z}_i)W_iW_i^T\Bigr\|_2.
\label{eq:corrected-z}
\end{align}
The probability the first term is large is bounded by
\begin{align*}
&\Prob\biggl(\Bigl\|\frac1n\sum_{i=1}^n \tilde{Z}_iW_iW_i^T - Q\Bigr\|_2 > t \mid \hat{r}\in B_{\epsilon_0}(r)\Bigr) \\
&\pc\le \sup_{\hat{r}\in B_{\epsilon_0}(r)}\Prob\biggl(\Bigl\|\frac1n\sum_{i=1}^n \tilde{Z}_iW_iW_i^T - Q\Bigr\|_2 > t\mid\hat{r}\in B_{\epsilon_0}(r)\Bigr)
\end{align*}
The $Z_i$'s are independent of $W_i$'s because the $Z_i$'s were computed using $\hat{r}$ that was in turn computed independently of the $X_i$'s. Thus this is a sum of \iid\ r.v. and we can bound it by
\begin{align*}
&\sup_{\hat{r}\in B_{\epsilon_r}(r)}\Prob\biggl(\Bigl\|\frac1n\sum_{i=1}^n \tilde{Z}_iW_iW_i^T - Q\Bigr\|_2 > t\mid\hat{r}\in B_{\epsilon_0}(r)\bigr) \\
&\pc\le 2\exp(-c_1(\sqrt{n}t - c_2\sqrt{d})^2),
\end{align*}
where $c_1,c_2$ depend on the sub-Gaussian norm of $\tilde{Z}W$. This is a consequence of Remark 5.40 in \cite{vershynin2010introduction}.

We now focus on bounding the second term in \eqref{eq:corrected-z}. Let $W_i$ denote the spherically symmetric (whitened) version of $X$. We restrict ourselves to the 3d subspace spanned by $u,v,\hat{r}$. Let $C_{\hat{r}}$ (for cone) be the ``bad'' region, \ie\ the region where $Z \ne \tilde{Z}$. We have
$$
\Bigl\|\frac1n\sum_{i=1}^n (Z_i-\tilde{Z}_i)W_iW_i^T\Bigr\|_2 =\Bigl\|\frac2n\sum_{i=1}^n \ones_{C_{\hat{r}}}(W_i)W_iW_i^T\Bigr\|_2.
$$
$\ones_{C_{\hat{r}}}$ is bounded, hence $\ones_{C_{\hat{r}}}(W_i)W_i$ is sub-Gaussian and
\begin{align*}
&\Prob\bigg(\Bigl\|\frac2n\sum_{i=1}^n \ones_{C_{\hat{r}}}(W_i)W_iW_i^T- 2\Expect[\ones_{C_{\hat{r}}}(W)WW^T] \Bigr\|_2> t\mid\hat{r}\in B_{\epsilon_0}(r)\big) \\
&\pc\le 2\exp(-c_1(\sqrt{n}t - c_2\sqrt{d})^2),
\end{align*}
where $c_1,c_2$ depend on the sub-Gaussian norm of $X$. It remains to bound $\norm{\Expect[\ones_{C_{\hat{r}}}(W)WW^T]}_2$. 

We use Jensen's inequality to obtain
\begin{align*}
&\pc\norm{\Expect[\ones_{C_{\hat{r}}}(W)WW^T]}_2 \le \Expect[\ones_{C_{\hat{r}}}(W)\norm{WW^T}_2] \\
&\pc\le C\Prob(W\in C_{\hat{r}}),
\end{align*}
where the constant $C$ depends on the number of mixture components.
Finally, we bound $\Prob(W\in C_{\hat{r}})$. 

The distribution of $W_i$ is spherically symmetric so the probability that $W_i\in C_{\hat{r}}$ is proportional to the surface area of the set
$$
S = \{w\in\bS^{k+1}\mid \hat{r}^Tw \le 0, u_1^Tw, \dots, u_k^Tw \ge 0\}.
$$
This set is contained in the set 
$$
\bar{S} = \{w\in \bS^{k+1}\mid \norm{w - u_1}_2 \le \norm{\hat{r} - r}_2\}
$$
This set is nothing but the spherical cap of radius $\norm{\hat{r} - r}_2$ in $k+1$ dimensions. An upper bound is 
$$
\gamma(\bar{S}) \le \exp(-(k+1)\cos(\frac{2-r^2}{2ab})^2),
$$
where $\gamma$ denotes the uniform measure on the unit sphere.
We substitute these bounds into our expression for the second term to obtain
\begin{align*}
&\pc\Expect[\ones_{C_{\hat{r}}}(W)WW^T] \\
&\pc\le C\arccos\left(\frac{2 - \epsilon_r^2}{2}\right)(1 - \cos\angle(u,v))(1 + \cos\angle(u,v) + \\
&\pc\pc\cos^2\angle(u,v)) \\
&\textstyle\pc\le C\arccos\left(1 - \frac{\epsilon_r^2}{2}\right).
\end{align*}
We combines these bounds to deduce
\begin{align*}
&\Prob(\|\hat{Q} - Q\|_2 > C\arccos\left(1 - \frac{\epsilon_r^2}{2}\right) + \epsilon\mid\hat{r}\in B_{\epsilon_0}(r)) \\
&\pc\le C\exp\left(-\min\left\{\frac{c_1n\epsilon^2}{d},\bigl(c_1'\sqrt{n}\epsilon - c_2'\sqrt{d}\bigr)^2\right\}\right).
\end{align*}
Finally, we have
\begin{align*}
&\Prob(\|\hat{Q} - Q\|_2 > \epsilon\mid\hat{r}\in B_{\epsilon_0}(r)) \\
&\pc\le \Prob(\|\hat{Q} - Q\|_2 > \epsilon \mid \norm{\hat{r} - r}_2 \le \sqrt{1 - \cos(\epsilon/2)}, \hat{r}\in B_{\epsilon_0}(r)) \\
&\pc\pc+ \Prob(\norm{\hat{r} - r}_2 \le \sqrt{1 - \cos(\epsilon/2)}).
\end{align*}
Let 
$$
f(\epsilon) := \min\left\{\frac{c_1n\epsilon^2}{d},\bigl(c_1'\sqrt{n}\epsilon - c_2'\sqrt{d}\bigr)^2\right\}.
$$
The first term is bounded by $C\exp\left(-f(\epsilon/2)\right)$ and the second term is bounded by $C\exp\left(-f(\sqrt{1 - \cos(\epsilon/2)})\right)$. We deduce
\begin{align*}
&\Prob(\|\hat{Q} - Q\|_2 > \epsilon\mid\norm{\hat{r} - r}_2 \le \epsilon_0) \\
&\pc\le C(\exp(-f(\sqrt{1-\cos(\epsilon/2)})) + \exp(-f(\epsilon/2)))\\
&\pc\le C\exp(-f(\min\{\sqrt{1-\cos(\epsilon/2)},\epsilon/2\})).
\end{align*}

\section{Principal Hessian Directions}

In this section, we apply the principal Hessian directions (pHd) \cite{li1992principal} method to our setting, and demontrate its failure to discover the space spanned by parameter profile when $\mu =0$. Recall that pHd considers a setting in which  features $X_i\in \reals^d$ are i.i.d.~and normally distributed with mean $\mu$ and covariance $\Sigma$, while labels $Y_i\in R$ lie, in expectation, in a $k$-dimensional manifold. In particular, some smooth $h:\reals^k\to\reals$, $k\ll d$:
$$\expect[Y\mid X= x] = h(\<u_1,x\>,\ldots,\<u_k,x\>)$$
where $u_\ell\in \reals^d$, $\ell \in [k]$. The method effectively creates an estimate 
$$\hat{H} =
n^{-1}\sum_{i=1}^nY_i\, \Sigma ^{-\frac12}X_iX_i^T\Sigma^{-\frac12}\in\reals^{d\times d}
$$
of the Hessian
\begin{align}\begin{split}H=\expect[\nabla_x^2 h(\<u_1,X\>,\ldots,\<u_k,X\>)] =\\ U^T \expect[\nabla^2_v h(\<u_1,X\>,\ldots,\<u_k,X\>)]  U,\end{split}\label{pHd:hessian} \end{align} where $\nabla_v^2h$ is the Hessian of the mapping $v\mapsto h(v)$, for $v\in \reals^k$, and $U$ the matrix of profiles. As in our case, the method discovers $\linspan(u_1,\ldots,u_k)$ from the eigenvectors of the eigenvalues that ``stand out'', after appropriate rotation in terms of $\Sigma$.

 Unfortunately, in the case of linear classifiers, 
$$h(v) =  \sum_{\ell=1}^kp_\ell g(v_\ell) $$ 
for $g(s) = 2f(s)-1$ is anti-symmetric. As a result, $\nabla^2_v h$ is a diagonal matrix whose $\ell$-th entry in the diagonal is $g''(v_\ell)$. Since $g$ is anti-symmetric, so is $g''$. Hence, if $\mu = 0$ we have that $\expect[g''(\langle u_\ell,X \rangle)]=0$; hence, the Hessian $H$ given by \eqref{pHd:hessian} will in fact be zero, and the method will fail to discover any signal pertaining to $\linspan(U)$.

This calls for an application of the pHd method to a transform of the labels $Y$. This ought to be non-linear, as an affine transform would preserve the above property. Moreover, given that these labels are binary,  polynomial transforms do not add any additional signal to $Y$, and are therefore not much help in accomplishing this task. In contrast, the `mirrorring' approach that we propose  provides a means of transforming the labels so that their expectation indeed carries sufficient information to extract the span $U$, as evidenced by Theorem~1.
\bibliography{references}
\bibliographystyle{icml2014}